\title{Exact and Metaheuristic Approaches for the Production Leveling Problem}
\newtheorem{theorem}{Theorem}
\newcommand{\cprob}[3]{
\begin{center}
{\small
\begin{tabularx}{0.98\columnwidth}{ll}
\toprule
\multicolumn{2}{c}{\textsc{#1 (Decision problem)}} \\
\midrule
\textbf{Instance}:   & \parbox[t]{0.8\columnwidth}{#2\vspace*{1mm}}  \\
\textbf{Question}:   & \parbox[t]{0.8\columnwidth}{#3\vspace*{.5mm}} \\
\bottomrule
\end{tabularx}
}
\end{center}
} 
\newcommand{\NP}{$\mathcal{NP}$}
\newacronym{plp}{PLP}{Production Leveling Problem}
\newacronym{bacp}{BACP}{Balanced Academic Curriculum Problem}
\newacronym{salb}{SALB}{Simple Assembly Line Balancing}
\newacronym{tps}{TPS}{Toyota Production System}
\newacronym{mip}{MIP}{Mixed Integer Programming}
\newacronym{cp}{CP}{Constraint Programming}
\newacronym{sos1}{SOS1}{Specially Ordered Set Type 1}
\newacronym{vnd}{VND}{Variable Neighborhood Descent}
\newacronym{vns}{VNS}{Variable Neighborhood Search}
\newacronym{smac}{SMAC}{Sequential Model-based Algorithm Configuration}
\author{
  Johannes Vass \\
  \texttt{jvass@dbai.tuwien.ac.at}\\
  \And
  Marie-Louise Lackner\\
  \texttt{marie-louise.lackner@tuwien.ac.at} \\
  \And
  Nysret Musliu \\
  \texttt{musliu@dbai.tuwien.ac.at}\\[3em]
  Christian Doppler Laboratory for Artificial Intelligence and Optimization for Planning and Scheduling \\
  Databases and Artificial Intelligence Group \\
  TU Wien, 1040 Wien, Austria
}
\begin{document}

\maketitle

\begin{abstract}
In this paper we introduce a new problem in the field of production planning which we call the Production Leveling Problem.
The task is to assign orders to production periods such that the load in each period and on each production resource is balanced, capacity limits are not exceeded and the orders' priorities are taken into account.
Production Leveling is an important intermediate step between long-term planning and the final scheduling of orders within a production period, as it is responsible for selecting good subsets of orders to be scheduled within each period.

A formal model of the problem is proposed and \NP-hardness is shown by reduction from Bin Backing.
As an exact method for solving moderately sized instances we introduce a MIP formulation.
For solving large problem instances, metaheuristic local search is investigated.
A greedy heuristic and two neighborhood structures for local search are proposed, in order to apply them using Variable Neighborhood Descent and Simulated Annealing.
Regarding exact techniques, the main question of research is, up to which size instances are solvable within a fixed amount of time.
For the metaheuristic approaches the aim is to show that they produce near-optimal solutions for smaller instances, but also scale well to very large instances.

A set of realistic problem instances from an industrial partner is contributed to the literature, as well as random instance generators.
The experimental evaluation conveys that the proposed MIP model works well for instances with up to 250 orders.
Out of the investigated metaheuristic approaches, Simulated Annealing achieves the best results.
It is shown to produce solutions with less than $3\%$ average optimality gap on small instances and to scale well up to thousands of orders and dozens of periods and products.
The presented metaheuristic methods are already being used in the industry.
\end{abstract}

\keywords{Production Leveling \and Integer Programming \and Metaheuristics \and Complexity Analysis}

\section{Introduction}

Production systems are being subject to continuous and radical change in the course of the last decades.
The need for productivity improvements is provoking companies to invest heavily in automation on all levels.
Production planning plays a major role in these developments as the replacement of manual planning with software-assisted or even autonomous systems can lead to considerable efficiency increases.

We introduce a new problem in the field of production planning which arises from the needs of an industrial partner.
It is a combinatorial optimization problem which treats the leveling of production and we therefore call it the \gls{plp}.
It belongs to medium-term planning, which means it is intended to be embedded between long-term planning and the scheduling of the concrete production sequence.

The problem is concerned with assigning orders of certain product types and demand sizes to production periods such that the production volume of each product type is leveled across all periods.
Furthermore, the overall amount produced in each period is subject to leveling as well.
A solution is feasible if the production volumes to be leveled do not exceed given maximum values.
The optimization part consists in minimizing the deviation of the production from the optimal balance, while at the same time making sure that the orders are assigned approximately in the order of their priorities.
The idea behind this goal is that considering orders only in the order of decreasing priority, as it is often done, frequently leads to spikes and idle times for certain resources involved in the production process.
Leveling these highs and lows results in a smoother production process because a similar product mix is produced in every period.
It is important to note, that the solution to the \gls{plp} is not a schedule since the orders are only assigned to production periods but the concrete execution sequence and assignment to machines and workers is not part of this problem.
The intention is rather so serve as a step between long-term planning and short-term production scheduling.

The main goal of this work consists of modeling the \gls{plp} formally and developing solution strategies for it.
The primary method we propose is Simulated Annealing because of its capability to solve large-scale instances of the problem.
Therefore, we developed two move types for obtaining neighboring solutions, which are used in Simulated Annealing.
Furthermore, we investigate \gls{vnd}, which is a deterministic algorithm using also these two neighborhoods.
We also propose a \gls{mip} model in order to obtain optimal solutions and lower bounds.
We are interested in finding the border between instances which can be solved exactly and those where the exponential nature of the problem makes the usage of \gls{mip} impractical.
Furthermore, we want to investigate how near we can get to optimality by the local search methods we propose.
To sum up, the main contributions of this paper are:
\begin{itemize}
	\item A mathematical model for the \gls{plp},
	\item a proof of \NP-hardness,
	\item a \gls{mip} model,
	\item two neighborhood structures for local search,
	\item realistic and randomly generated problem instances and
	\item an extensive evaluation of MIP and metaheuristic methods.
\end{itemize}

The rest of this paper is structured as follows:
Section~\ref{sec:problem-statement} presents the problem first informally, then a precise mathematical formulation is given and finally related work is discussed.
In Section~\ref{sec:complexity} the theoretical complexity of the \gls{plp} is analyzed, yielding an \NP-hardness result.
Section~\ref{sec:MIP} introduces a \gls{mip} model.
In Section~\ref{sec:local_search} we finally turn to local search methods and present the variant of Simulated Annealing we apply.
Experimental evaluation is performed in Section~\ref{sec:evaluation}, providing answers to the research questions which we formulated above.
Section~\ref{sec:conclusion} summarizes the results and presents ideas for future work.

\section{Problem Statement and Related Work}
\label{sec:problem-statement}

In this section we first want to build up an intuition for the \gls{plp} which we approach through the use of examples.
Then we specify the parameters, constraints and objective function formally.
Finally we shed light on related problems presented in the literature.

\subsection{Problem Description}
The input to the \gls{plp} are a list of orders, each of them having a demand value, priority and product type.
Furthermore, we are given a set of periods and the maximum production capacity per period, both for all product types together and for each one separately.
We search for solutions by finding an assignment of orders to periods such that the production volume is balanced between the periods while trying to stick to the sequence implied by the order's priorities as good as possible.
We can see the objective function of the \gls{plp} as the task of finding a good tradeoff between the following goals:
\begin{enumerate}
	\item Minimize the sum of deviations of the planned production volume to the average demand (i.e.\ the target value) for each period, ignoring the product types. This makes sure that the overall production per period is being leveled.
	\item Minimize the sum of deviations of the production volume of each product type to its respective mean (target) value, making sure that the production of each product type is being leveled.
	\item Minimize the number of times a higher prioritized order is planned for a later period than a lower prioritized order, which we call a priority inversion. This objective makes sure that more important orders are scheduled in earlier periods.
\end{enumerate}

Let us now explore the three optimization goals by means of examples:
\begin{enumerate}
	\item Figure~\ref{fig:plp-example-o1} shows a tiny example instance with five orders, which are shown as boxes, where the box height corresponds to the order size.
	The orders should be assigned to the three periods such that the distances of the stacks of orders and the dashed target line is minimized and no stack crosses the red line which represents the capacity limit.
	It is easy to see that this solution is optimal w.r.t. the leveling objective.
	
	\item Figure~\ref{fig:plp-example-o2} shows a slightly larger problem instance which has three product types (blue, green and red).
	It visualizes the solution from the perspective of the second objective, which works the same way as the first but discriminates by product types.
	That is, we seek to minimize for each of the three product types blue, green and red the deviation from the dashed target line in each period.
	
	\item Figure~\ref{fig:plp-example-o3} comes back to the first example, but views it from the perspective of the third objective.
	The numbers inside the orders signalize the priorities, whereby a larger number indicates a higher importance.
	That being said it is obvious that the red order should not be assigned to an earlier period than the yellow or the blue one.
	This bad state between red and blue/yellow is what we call a priority inversion and their number is what should be minimized by the third goal.
	In the example, a better solution can be easily constructed for example by swapping the red order with the yellow one, because it would make both priority inversions disappear.
\end{enumerate}

\begin{figure}
	\centering
	\includegraphics[width=.5\textwidth, trim={0 0 0 .9cm},clip]{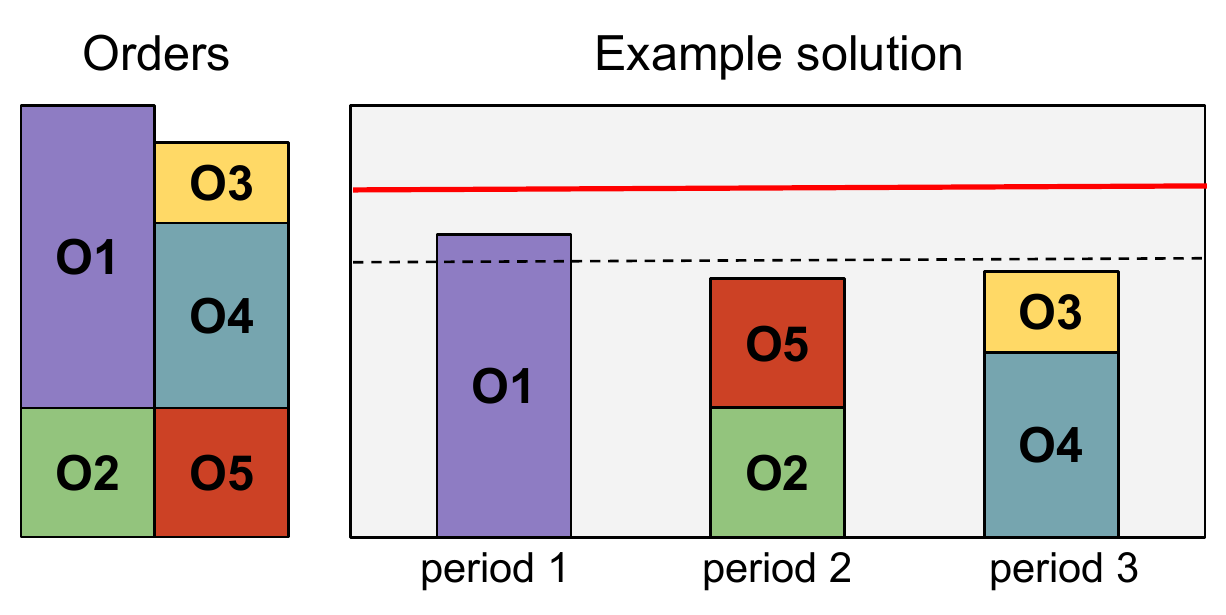}
	\caption{Example of the leveling objective for the total production amount. The dashed line represents the target value, the solid red line the capacity limit. This is the optimal solution w.r.t.\ the leveling objective.}
	\label{fig:plp-example-o1}
\end{figure}	

\begin{figure}
	\centering
	\includegraphics[width=.4\textwidth, trim={13.7cm 0.3cm 12.6cm .4cm},clip]{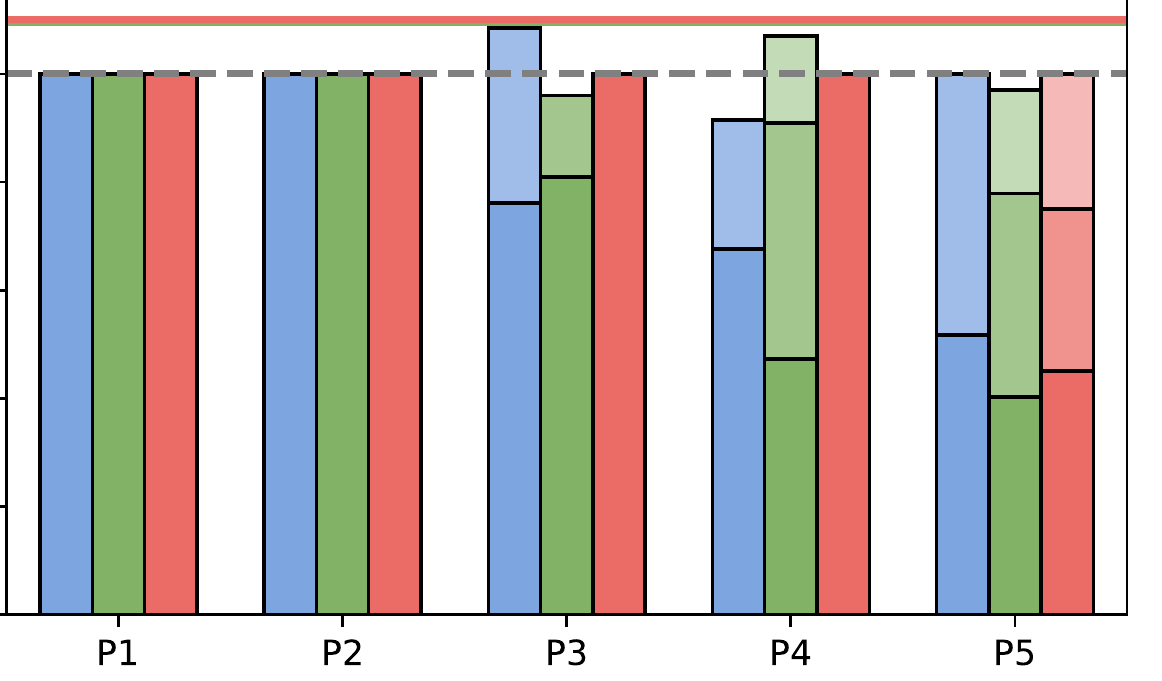}
	\caption{Example of the leveling objective for each product type (blue, green, red).}
	\label{fig:plp-example-o2}
\end{figure}

\begin{figure}
	\centering
	\includegraphics[width=.5\textwidth, trim={0 0 0 .9cm},clip]{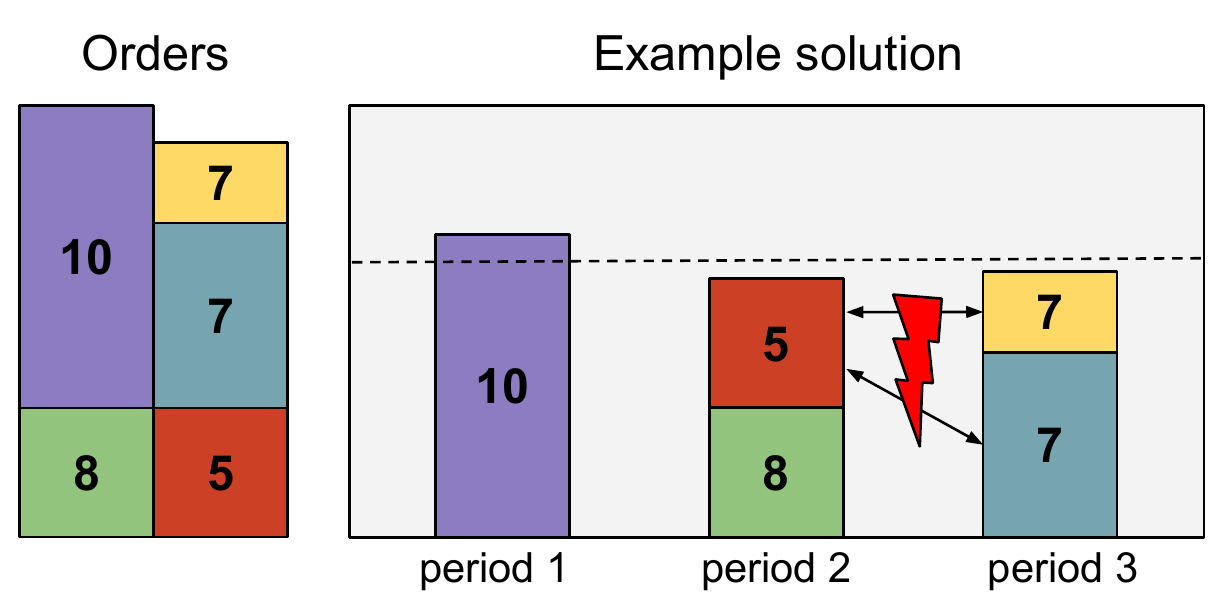}
	\caption{The same solution as above has two priority inversions. An optimal solution w.r.t.\ priorities would be to swap the red and the yellow order.}
	\label{fig:plp-example-o3}
\end{figure}

We have seen in the examples, that an optimal solution w.r.t.\ one objective is not necessarily optimal w.r.t.\ another.
As we want to combine the three objectives into one by a weighted sum, the location of the optima will clearly depend on the weights.
These weights must be determined on the basis of a specific use case because there is no general way to decide without domain knowledge e.g.\ how important one priority inversion is compared to one unit more of imbalance.
We worked out a sensible default weighting in cooperation with our industrial partner based on their real-life data and use it for all experiments throughout the paper.
How the objective function can be formally stated and how the weighting works is described in more detail in the following section.

\subsection{Mathematical Formulation}
\label{sec:mathematical-formulation}


Now we turn towards a formal description of the the problem, consisting of parameters, variables, constraints and the objective function.

\subsection*{Input parameters}
\small
\begin{tabular}{p{2cm}l}
	$K \subseteq  \mathbb{Z}^+$	& Set of orders $\{i \in \mathbb{Z}^+ | 1 \leq i \leq k\}$, where $k$ is the number of orders\\
	$M \subseteq  \mathbb{Z}^+$	& Set of product types $\{i \in \mathbb{Z}^+ | 1 \leq i \leq m\}$, where $m$ is the number of\\
	& product types\\
	$N \subseteq  \mathbb{Z}^+$	& Set of periods $\{i \in \mathbb{Z}^+ | 1 \leq i \leq n\}$, where $n$ is the number of periods\\
	$a_i \in \mathbb{R}^+$		& for each objective function component $i \in \{1, 2, 3\}$ the associated weight\\
	$c \in \mathbb{R}^+$       & the maximum overall production volume per period\\
	$c_t \in \mathbb{R}^+$     & for each product type $t \in M$ the maximum production volume per period\\
	$d_j \in \mathbb{Z}^+$     & for each order $j \in K$ its associated demand\\
	$p_j \in \mathbb{Z}^+$     & for each order $j \in K$ its associated priority\\
	$t_j \in \mathbb{Z}^+$     & for each order $j \in K$ the product type\\
	$d^* \in \mathbb{Z}^+$     & the target production volume per period, i.e.\ $\frac{1}{n} \sum_{j \in K} d_j$ \\
	$d_t^* \in \mathbb{Z}^+$   & the target  production volume per period for each product type $t \in M$, \\
	& i.e.\ $\frac{1}{n} \sum_{j \in K | t_j = t} d_j$
\end{tabular}
\normalsize

\subsection*{Variables}
\begin{itemize}
	\item For each order the production period for which it is planned:
	$$y_j \in N \quad \forall j \in K$$
	\item The production volume for each period (helper variable):
	$$w_i = \sum_{\substack{j \in K: \\ y_j = i}} d_j \quad \forall i \in N$$
	\item The production volume for each product type and period (helper variable): 
	$$w_{i,t} = \sum_{\substack{j \in K: \\ y_j = i \land t_j = t}} d_j \quad \forall i \in N, \forall t \in M$$
\end{itemize}

\subsection*{Hard constraints}
\begin{itemize}
	\item The limit for the overall production volume is satisfied for each period:
	$$ \forall i \in N \quad  w_i \leq c $$
	\item The limit for the production volume of each product type is satisfied for each period:
	$$ \forall i \in N, t \in M \quad w_{i,t} \leq c_p $$
\end{itemize}

\subsection*{Objective function}

The following three objective functions represent the three targets to minimize:

\begin{align}
f_1 = & \sum_{i \in N} | d^* - w_i | \label{eq:f1}\\
f_2 = & \sum_{t \in M} \bigg(\frac{1}{d_t^*} \cdot \sum_{i \in N} | d^*_t - w_{i,t}|\bigg) \label{eq:f2}\\
f_3 = & \vert \left\lbrace (i, j) \in K^2 : y_i > y_j \text{ and } p_i > p_j \right\rbrace \vert \label{eq:f3}
\end{align}
Function $f_1$ represents the sum over all periods of deviations from the overall target production volume (i.e.\ all product types at once).
Function $f_2$ states the sum over all product types of sums over all periods of the deviations from the target production volume for that product type, normalized by the respective target value.
The normalization is done so that every product has the same influence onto the objective function regardless of whether its average demand is high or low.
Function $f_3$ counts the number of priority inversions in the assignment, or in other words the number of order-pairs $(i,j)$ for which $i$ is planned after $j$ even though $i$ has a higher priority than $j$.

In order to combine these three objectives into a single objective function and achieve a weighting which does not change its behavior between instances with different number of orders, periods or product types, the cost components need to be normalized.

\begin{align}
g_1 = & \frac{1}{n \cdot d^*} \cdot f_1 	\label{obj1_abs}\\
g_2 = & \frac{1}{n \cdot m}  \cdot f_2		\label{obj2_abs}\\
g_3 = & \frac{2}{k \cdot (k-1)} \cdot f_3	\label{obj3_abs}
\end{align}
The normalization ensures that $g_1$ and $g_2$ stay between 0 and 1 with a high probability.
Only for degenerated instances, where even in good solutions the target is exceeded by factors $\geq 2$ higher values are possible for $g_1$ and $g_2$.
The value of $g_3$ is guaranteed to be $\leq 1$ because the maximum number of inversions in a permutation of length $k$ is $k\cdot (k-1)/2$.

The final objective function is then a weighted sum of the three normalized objective functions, where the weight $a_i$ of an objective can be seen approximately as its relative importance.
\begin{equation}
\mathbf{minimize}\; g = a_1 \cdot g_1\; +\; a_2 \cdot g_2\; +\; a_3 \cdot g_3
\label{eq:objective_function}
\end{equation}

\subsection*{Quadratic objective function}
For some instances of the \gls{plp} the above presented objective function based on absolute differences may not be well suited.
Intuitively, one could argue that a solution containing $i$ periods with a missing demand of 1 is better compared to an otherwise equal one containing 1 period with missing demand $i$.
However, the above presented objective penalizes the two scenarios in exactly the same way.
In order to penalize larger deviations more than small ones, we introduce an alternative variant of the objective function which calculates the penalty by taking squared differences.
This implies that also the normalization factors need to be adapted.

\begin{align}
\tilde{g}_1 = & \frac{1}{n \cdot (d^*)^2} \cdot \sum_{i \in N} (d^* - w_i)^2	\label{obj1_squ}\\
\tilde{g}_2 = & \frac{1}{n \cdot m}  \cdot \sum_{t \in M} \bigg(\frac{1}{(d_t^*)^2} \cdot \sum_{i \in N} (d^*_t - w_{i,t})^2\bigg) \label{obj2_squ}\\
\tilde{g}_3 = & \, g_3 \label{obj3_squ}
\end{align}
The final objective function using squared differences is again a weighted sum of the three normalized objective functions:
\begin{equation}
\mathbf{minimize}\; \tilde{g} = a_1 \cdot \tilde{g}_1\; +\; a_2 \cdot \tilde{g}_2\; +\; a_3 \cdot \tilde{g}_3
\label{eq:objective_function_squared}
\end{equation}

To our experience the two variants of the objective function behave very similarly for the vast majority of our instances.
Only in some rare cases, where no well-balanced solution is possible and there are large trade-offs to be made, we found that using the quadratic objective function produces results which intuitively look better than the ones produced by the absolute objective.
However, there are also disadvantages to consider when using the alternative objective:
\begin{itemize}
	\item It renders the otherwise linear problem a quadratic one, which makes it harder to solve using \gls{mip}.
	\item Due to the squares in the denominators of the normalization factors, delta costs are often very small which increases the risk of numerical instabilities when using exact solvers.
\end{itemize}

There is also evidence in the literature that there is no a priori reason to prefer one of the two objectives.
\cite{schaus_deviation_2007} investigated balancing objectives in a more general form and stated that a set of violation measures for the perfect balance is given by the $L_p$-norm of a vector of variables $X$ minus its mean $m$ for $p \geq 0$.
For $p = 1$ that corresponds to $\sum_{X \in \mathcal{X}} |X - m|$, which is the same as our linear objective $f_1$.
Similarly, the variant with $p = 2$ corresponds to the quadratic-difference based objective of the \gls{plp}.
As a conclusion of the study of the different variants the authors state that neither criterion subsumes the others~\cite{schaus_deviation_2007}, which means that there is no reason to commit oneself to only one.

\subsection{Related Work}
The term \textit{production leveling} is commonly associated with the \gls{tps}, where it is also called Heijunka.
It is a concept which aims to increase efficiency and flexibility of mass-production by leveling the production in order to keep the stock size low and reduce waste.
Ideally the result of applying Heijunka is zero fluctuation at the final assembly line.
Heijunka can mean both the leveling of volume at the final assembly line and the leveling of the production of intermediary materials~\cite{ohno_toyota_1998}.

The \gls{plp} is clearly inspired by Heijunka in the sense that the usage of resources should be leveled in order to increase production efficiency but its concepts differ quite substantially from the classical implementation of Heijunka (in the \gls{tps}) in the following points:
\begin{itemize}
	\item The \gls{plp} does not operate on the level of schedules but disregards the ordering which the items are produced within a period.
	In other words, it is concerned with planning and not scheduling, which is performed subsequently for each production period.
	\item Intermediate materials are not part of the \gls{plp}.
	While Heijunka aims to level also \textit{their} production to keep stock sizes of intermediary products small, the \gls{plp} is currently only concerned with one level.
\end{itemize}

There exists a whole research area concerning scheduling problems inspired by ideas from the \gls{tps} and especially Heijunka.
Under the umbrella term level scheduling there exist several problems such as the Output Variation Problem and the Product Rate Variation Problem~\cite{kubiak_minimizing_1993, boysen_product_2009}.
They have in common that they aim to find the best schedule for production at the final assembly line so that the demand for intermediary materials and their production is leveled which keeps the necessary stock sizes low.
However, these problems are quite different from the \gls{plp} due to the same reasons presented above with respect to Heijunka.

Under the term \textit{Balancing Problems} several other problems are known in the literature, which are more closely related to the \gls{plp}:
\begin{itemize}
	\item The Balanced Academic Curriculum Problem (\gls{bacp}): This problem deals with assigning courses to semesters such that the student's load is balanced and prerequisites are fulfilled~\cite{chiarandini_balanced_2012}.
	The balancing of the sum of course sizes assigned to a semester is equivalent to the balancing of production load which we are confronted with in the \gls{plp}.
	There exists also variant called the Generalized \gls{bacp} which introduces so-called \textit{Curricula} where each of them should be leveled~\cite{di_gaspero_hybrid_2008}.
	The concept is similar to the product-types of the \gls{plp} except for that an order has only one product type while a course can be in multiple curricula.
	
	The big difference to our problem are additional constraints of the \gls{bacp}, which enforce prerequisites between courses -- a concept appearing frequently in the diverse balancing problems.
	They differ from the \gls{plp}'s priorities in the following aspects:
	Prerequisites are hard constraints while priorities are soft constraints, which makes a difference especially for exact solvers.
	Furthermore, prerequisites require one course to be finished strictly before another starts while it does not seem sensible for the \gls{plp} to require a penalty in case two differently prioritized orders are scheduled to the same period.
	There we only want to penalize when the ordering implied by the priorities is inverted, hence the term \textit{priority inversion}.
	This is a fundamental difference which makes it impossible to solve one problem by converting it to the other.
	
	\item Nurse scheduling problems are an active field of research since their introduction in the 70s~\cite{warner_scheduling_1976}.
	While most of the contributions do not consider workload balancing, a few of them, starting with \cite{mullinax_assigning_2002}, do consider also a fair distribution of the nurses' workload.
	They propose an Integer Programming model for the Nurse to Patient Assignment Problem in neonatal intensive care, which is concerned with finding the optimal assignment of patients to a set of working nurses, so that the workload of the team is balanced and a number of restrictions are fulfilled.
	The main difficulty is the variability of the infant's conditions which greatly influences the amount of work needed.
	The problem is often solved in two steps by first assigning nurses to zones of the nursery and then assigns infants to nurses.
	More recent work in this area is for example by a paper by Schaus et al.\, who investigated a \gls{cp} approach using the \texttt{spread} constraint for balancing~\cite{schaus_scalable_2009}.
	Furthermore, stochastic programming based approaches with Bender's decomposition have been proposed~\cite{punnakitikashem_stochastic_2013}.
	
	The balancing objective of the Nurse to Patient Assignment Problem is again very similar to the objective function which we introduced for the \gls{plp}.
	However, we cannot directly compare to the results because the priorities of the \gls{plp} are have no equivalent in this problem and also vice-versa some side-constraints and the zone assignment cannot be expressed.
	
	\item \gls{salb}: An assembly line consists of identical work stations aligned along a conveyor belt.
	Workpieces move along the conveyor belt and at each station a set of (assembly) tasks is carried out, where each of them has a task time.
	By the cycle time we denote the time after which workpieces are moved on to the next station.
	The goal is either to minimize the number of work stations needed given a fixed cycle time or to minimize the cycle time given a fixed number of work stations.
	
	The \gls{salb} problem is the simplest and most intensively studied variant of Assembly Line Balancing.
	A comprehensive overview over the different variants is provided by~\cite{boysen_classification_2007}.
	When comparing the \gls{salb} problem to the \gls{plp}, tasks map to orders, task times to order sizes and the fixed cycle time to the maximum capacity per production period.
	Hence, minimizing the cycle time is equivalent to minimizing the maximum load of a production period of the \gls{plp}, which would also be an admissible balancing objective.
	There is also recent work by \cite{azizoglu_workload_2018} where the sum of squared deviations of the workstation loads is minimized, which is equivalent to the second variant of the objective which we proposed.
	However, the difference between precedence relations on the one hand and priority inversion minimization on the other hand, disallows once again a direct comparison between the problems.
\end{itemize}

For a more extensive list of Balancing Problems please refer to the dissertation of Pierre Schaus which investigates \gls{cp} modeling approaches for a very diverse set of Balancing and Bin-Packing Problems~\cite{schaus_solving_2009}.

\section{Complexity analysis}
\label{sec:complexity}

As we are studying a new problem we are interested in its computational complexity.
In this section we provide an \NP-completeness proof of a decision variant of the \gls{plp}, followed by an argumentation of \NP-hardness of the \gls{plp} optimization problem presented previously.

In order to prove \NP-completeness, we consider the following decision variant of the problem where the objective function is dropped completely.
Hence the task is solely to find a feasible assignment of orders to periods:

\cprob{Production leveling}
{A set of orders $K$, of products $M$ and of periods $N$.
	For each order $j \in K$ its demand $d_j$ with $d_j > 0$, priority $p_j$ and product type $t_j$.
	The maximum production capacity per period $c$ and for each  product type $t \in M$ its associated maximum production capacity per period $c_t$.}
{Does there exist an assignment $\left\lbrace y_j : N \; | \; j \in K \right\rbrace$ of orders to periods such that the capacity limit $c$ and the capacity limit for each product type $\left\lbrace c_t: t \in M \right\rbrace$ are not exceeded for any period? 
}

\begin{theorem}
	The Production Leveling decision problem is \NP-complete even on instances with $\vert M \vert = 1$, i.e., a single product type.
\end{theorem}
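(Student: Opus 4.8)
The plan is to prove both membership in \NP{} and \NP-hardness, with the hardness part coming from a reduction from the decision version of Bin Packing, as announced in the abstract. For membership, I would take the assignment $y$ itself as the certificate, i.e.\ the function sending each order $j \in K$ to a period $y_j \in N$. Given such a certificate, one computes all loads $w_i$ and $w_{i,t}$ in a single pass over the orders and checks the two families of inequalities $w_i \le c$ and $w_{i,t} \le c_t$; this verification is polynomial in the input size, so the problem lies in \NP.

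For \NP-hardness I would reduce from Bin Packing: given item sizes $s_1, \dots, s_k \in \mathbb{Z}^+$, a bin capacity $C \in \mathbb{Z}^+$ and a bound $B \in \mathbb{Z}^+$ on the number of bins, decide whether all items fit into at most $B$ bins. This problem is well known to be \NP-complete. From such an instance I would construct a Production Leveling instance with a single product type as follows: set $K = \{1, \dots, k\}$ with demands $d_j = s_j$, let $M = \{1\}$ and $t_j = 1$ for every order, take $N = \{1, \dots, B\}$ so that $n = B$, and set both capacities equal to the bin capacity, $c = c_1 = C$. The priorities $p_j$ may be chosen arbitrarily (for instance all equal to $1$), since the decision variant drops the objective function entirely. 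This construction is clearly computable in polynomial time, and the demands stay positive integers exactly as required.

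It then remains to verify the correspondence, which is where the one point of care lies: because there is a single product type and $c_1 = c$, the per-type constraint $w_{i,1} \le c_1$ coincides with the overall constraint $w_i \le c$, so a feasible assignment is precisely one in which the total demand in each of the $n = B$ periods does not exceed $C$. Interpreting periods as bins and demands as item sizes, feasible assignments are in bijection with packings of the $k$ items into at most $B$ bins of capacity $C$, with empty periods corresponding to unused bins. Hence the constructed instance is a yes-instance of Production Leveling if and only if the original Bin Packing instance is a yes-instance. Honestly the reduction itself is near-trivial; the only genuine subtlety worth stating explicitly is the collapse of the two capacity constraints into one under $|M| = 1$, which is exactly what lets the single-product-type case already capture Bin Packing. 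Together with membership in \NP, this yields \NP-completeness even for $|M| = 1$.
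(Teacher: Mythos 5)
Your proposal is correct and follows essentially the same route as the paper: \NP-membership via polynomial-time verification of the capacity inequalities, and \NP-hardness via the identical reduction from Bin Packing (periods as bins, demands as item sizes, $c = c_1 = $ bin capacity, constant priorities, single product type). Your explicit remark that the per-type and overall capacity constraints collapse when $\vert M \vert = 1$ is a nice touch of care that the paper leaves implicit, but it does not change the substance of the argument.
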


\begin{proof}
	In order to prove \NP-hardness we give a polynomial time reduction from  the \NP-complete Bin Packing decision problem \cite{vazirani_approximation_2003}, which is defined as follows:
	
	\cprob{Bin Packing}
	{A set of $n$ bins $S_1, S_2, \ldots, S_n$ of size $V$ and a list of $k$ items of respective sizes $a_1, a_2, \ldots, a_k$}
	{Can the items be packed into the bins? \\
		I.e., is there an $n$-partition $S_1 \cup S_2 \cup \ldots \cup S_n$ of the set $\left\lbrace 1, 2, \ldots, k\right\rbrace$ such that $\sum_{i \in S_j} a_i \leq V$ for all $j \in \left\lbrace 1, \ldots, n\right\rbrace$?}
	The construction of the \gls{plp} instance is straightforward:
	\begin{compactitem}
		\item $M = \left\lbrace 1 \right\rbrace$
		\item $N = \left\lbrace 1, 2, \ldots, n \right\rbrace$
		\item $K = \left\lbrace 1, 2, \ldots, k \right\rbrace$
		\item $d_j = a_j \quad\forall j \in K$
		\item $p_j = 1 \quad\forall j \in K$
		\item $t_j = 1 \quad\forall j \in K$
		\item $c  = c_1 = V$
	\end{compactitem}
	That is, bins are converted to periods, each item with size $a_i$ to an order with demand $d_i$ and the bin capacity $V$ becomes the maximum capacity per period $c$.
	There is only one product type and order priorities can be defined as some arbitrary constant.
	
	If there exists a feasible solution to this instance of the Production Leveling decision problem (i.e.\ an assignment of orders to periods such that the capacity limit is obeyed), it follows that there exists also a valid bin packing into $n$ bins because each bin with size $V$ corresponds exactly to a period with the same capacity.
	Analogously, if no feasible solution of the \gls{plp} exists we know that the corresponding instance of Bin Packing is infeasible as well.
	Hence any instance of Bin Packing can be solved by converting it to an instance of the Production Leveling decision problem and solving that one.
	As the conversion is possible in linear time, the Production Leveling decision problem must be at least as hard as Bin Packing.
	Consequently, we have proven that it is \NP-hard, even when considering only a single product type ($\vert M \vert = 1$).
	
	In order to prove \NP-membership, let us consider an assignment $\left\lbrace y_j : N \; | \; j \in K \right\rbrace$ of orders to periods. In order to verify whether this assignment is a valid solution, we need to check whether all capacity constraints are fulfilled:
	\begin{compactitem}
		\item Is the overall capacity limit satisfied for each period?
		\[\sum_{\substack{j \in K: \\ y_j = i}} d_j 
		\stackrel{?}{\leq} c \quad \forall i \in N \]
		\item Are the capacity bounds per period and product type satisfied?
		\[\sum_{\substack{j \in K: \\ y_j = i \land t_j = t}} d_j
		\stackrel{?}{\leq} c_t
		\quad \forall i \in N, t \in M \]
	\end{compactitem}
	In total, the number of inequalities that need to be checked is: $\vert N \vert + \vert N \vert \cdot \vert M \vert$ which is clearly polynomial in the size of the instance. 
	
	As the Production Leveling decision problem with only one product type is both \NP-hard and in \NP it is \NP-complete.
\end{proof}

The Production Leveling optimization problem presented in Section~\ref{sec:mathematical-formulation} differs from the decision variant in that we do not only search a feasible assignment of orders to periods but the best possible one according to an objective function.
Obviously, the optimization problem is \NP-hard as well because it needs to satisfy the exact same set of hard constraints.
\NP-membership is, however, not the case as there is no polynomial-time algorithm for deciding whether a given solution is the optimal one.

\section{Integer programming model for the PLP}
\label{sec:MIP}

The \gls{plp} is a weakly constrained optimization problem as the only existing constraints are upper bounds on the planned production volume per period and product.
There are no constraints involved in the prioritization and the objective function is a trade-off which means for example that pruning solutions with a bad priority objective is not immediately possible as long as there is enough room for improvement in the balancing objectives.
Hence the feasible solution space is very large.
We propose an integer programming model for the \gls{plp} which is capable of providing exact solutions to the optimization problem for moderately sized instances.

The model is based on the mathematical formulation presented in Section~\ref{sec:mathematical-formulation}.
The problem input parameters are exactly the same, which is why they are not repeated in this section.
However, because of performance reasons and syntactical restrictions we operate on more and in a few places also different variables.
We introduce a binary view onto the order-period assignment through the variables $X$ and the replacement of the helper variables $w_i$ and $w_{i,t}$ by two variables representing missing and surplus demand.

\subsection*{Variables}
\small
\begin{tabular}{p{2cm}l}
	$x_{ij} \in \{0, 1\}$		& for each $i \in N$, $j \in K$ stating if order $j$ is planned in period $i$\\
	$y_j \in N$    				& for each $j \in K$, whose value is the assigned period of order $j$\\
	$z_{ij} \in \{0, 1\}$     	& for orders $i,j \in K$ where $p_i > p_j$, existence of a priority inversion\\                 & between $i$ and $j$\\
	$s_i^+ \in \mathbb{R}^+$    & for each $i \in N$ the surplus demand for period $i$\\
	$s_i^- \in \mathbb{R}^+$    & for each $i \in N$ the missing demand for period $i$\\
	$s_{it}^+ \in \mathbb{R}^+$ & for each $i \in N$, $t \in M$ the surplus demand for period $i$ and product $t$\\ 
	$s_{it}^- \in \mathbb{R}^+$ & for each $i \in N$, $t \in M$ the missing demand for period $i$ and product $t$
\end{tabular}
\normalsize

\subsection*{Formulation}

\small
\begin{align}
\textsf{min}\hspace{1cm} & a_1 g_1 + a_2 g_2 + a_3 g_3 && \label{ex8:1}\\
\textsf{s.t.}\hspace{1cm} & \sum_{i \in N} x_{ij} = 1							& j \in K 				  \label{eq:mip_one_x}\\
						  & \sum_{i \in N} i \cdot x_{ij} = y_j 				& j \in K 			      \label{eq:mip_link_xy}\\
						  & y_i - y_j \leq (n-1) z_{ij}							& i, j \in K \enspace\vert\enspace p_i > p_j \label{eq:mip_link_yz}\\
						  & \sum_{j \in K} d_j x_{ij} + s_i^+ - s_i^- = d^*   	& i \in N         \label{eq:mip_slack_constr}\\
						  & \sum_{j \in K | t_j = t} d_j x_{ij}  + s_{it}^+ - s_{it}^- = d_t^*    & i \in N, \enspace t \in M \label{eq:mip_slack_p_constr}\\
						  & d^* + s_i^+ \leq c      							& i \in N 			         \label{eq:mip_cmax_total}\\
						  & d_t^* + s_{it}^+ \leq c_t   						& i \in N, \enspace t \in M \label{eq:mip_cmax[p]}\\
						  & y_i \leq y_j                                        & i, j \in S, S \subseteq {K} \enspace\vert\enspace p_i \geq p_j, d_i = d_j, t_i = t_j \label{eq:mip_symmetry} \\
						  & \sum_{t \in M} (s_{it}^- - s_{it}^+) = s_i^- - s_i^+   & i \in N \label{eq:mip_link_slacks}
\end{align}

\normalsize

Constraints \eqref{eq:mip_one_x} to \eqref{eq:mip_slack_p_constr} are the model's required helper constraints.
Constraint~\eqref{eq:mip_one_x} makes sure, that there is exactly one period to which an order is assigned.
Constraint~\eqref{eq:mip_link_xy} links the $x_{ij}$ to the $y_i$ variables.
Constraint~\eqref{eq:mip_link_yz} links the $y_i$ to the $z_{i,j}$ variables.
It makes sure that for every pair of orders $i,j$ where $i$ has a higher priority than $j$, $z_{ij}$ is 1 (representing an inversion) if $i$ is planned later than $j$.
Constraint~\eqref{eq:mip_slack_constr} states for each period that the total demand planned plus the surplus minus the slack equals $d^*$.
As both variables have positive domains and they are subject to minimization, at most one of them will be non-zero in any optimal solution.
Constraint~\eqref{eq:mip_slack_p_constr} repeats this relationship over the variables $s_{it}^+$ and $s_{it}^-$ for each product type $t$.

Constraint~\eqref{eq:mip_cmax_total} ensures that the capacity bound per period is satisfied.
This is elegantly achieved by stating that the sum of target demand $d^*$ and the surplus variable $s^+$ does not exceed the threshold.
Analogously, Constraint~\eqref{eq:mip_cmax[p]} enforces the capacity limit per period and product type.

Finally, there are two redundant constraints for strengthening the formulation:
Constraint~\eqref{eq:mip_symmetry} enforces a dominance relation for all pairs of orders which have the same product type and demand value.
The constraint requires that the higher prioritized order occurs not later than the lower prioritized one which is sensible because otherwise we could swap the two orders to obtain a better solution.
This cuts off parts of the search space where the optimal solution cannot reside.
Constraint~\eqref{eq:mip_link_slacks} links the $s_i^{\{+,-\}}$ and $s_{it}^{\{+,-\}}$ variables together, which also leads to improvements in the average runtime.
	
\subsection{Absolute-difference-based objective}

The following objective function is equivalent to the one presented in Section~\ref{sec:mathematical-formulation} but here it is stated on the variable set of the MIP formulation.
It is not hard to see that in function $g_1$ the sum of the slack and surplus variable $(s_i^+ + s_i^-)$ is equivalent to the absolute difference between planned an target demand $| d^* - w_i |$, because at least one of $s_i^+$ and $s_i^-$ will be 0 in any optimal solution and the other one holds the absolute difference.
The same holds true for the analogous variables in $g_2$.

\begin{align}
g_1 = & \frac{1}{n \cdot d^*} \cdot \sum_{i \in N} (s_i^+ + s_i^-)\\
g_2 = & \frac{1}{n \cdot m}  \cdot \sum_{t \in M} \bigg(\frac{1}{d_t^*} \cdot \sum_{i \in N} (s_{it}^+ + s_{it}^-)\bigg)\\
g_3 = & \frac{2}{k \cdot (k-1)} \cdot \sum_{i,j \in K} z_{i,j}\
\end{align}

\subsection{Squared-difference-based objective}

The second variant of the objective function which uses squared differences can also be easily expressed:

\begin{align}
\tilde{g}_1 = & \frac{1}{n \cdot (d^*)^2} \cdot \sum_{i \in N} (s_i^+)^2 + (s_i^-)^2\\
\tilde{g}_2 = & \frac{1}{n \cdot m}  \cdot \sum_{t \in M} \bigg(\frac{1}{(d_t^*)^2} \cdot \sum_{i \in N} (s_{it}^+)^2 + (s_{it}^-)^2\bigg)\\
\tilde{g}_3 = & g_3
\end{align}

The main difference to the first version is that the surplus and missing demand appears squared in the objective function.
Furthermore, the normalization factors have been adapted.

Obviously, this formulation is no longer a linear program.
However, as many state-of-the art solvers also support quadratic optimization we consider this variant worth reporting.

\section{Local search for the PLP}
\label{sec:local_search}

As shown earlier, the \gls{plp} is an \NP-hard optimization problem, i.e.\ it belongs to a class of problems for which no polynomial-time algorithms have been found so far and it is even unclear whether such algorithms exist.
Consequently, there exist instances for which the required running time of any known algorithm to find the exact solution is exponential.
It means also that exact solution approaches are impractical for solving large instances of the \gls{plp} and we need to take heuristic methods into account.

In this section we present metaheuristic local search techniques to solve the \gls{plp}.
To obtain initial solutions we present two different approaches.
Afterwards two neighborhood structures for the \gls{plp} are described and finally we explain the local search algorithms.

\subsection{Construction of initial solutions}
\label{sec:greedy}

We developed a greedy construction heuristic which is capable of constructing good initial solutions in a very small amount of time.
The parameters of the algorithm are a list of orders, the number of periods $n$ and the random selection size $r$.
The first step of the algorithm is sorting the orders by priority decreasingly which is already the approximate handling of objective 3.
Then we loop over all periods $i$ from 1 to $n$, performing the following steps:
\begin{enumerate}
	\item Examine sequentially the orders from the head of the sorted order list:
	For each of them, if it still fits into this period obeying the capacity limits, calculate the delta cost for $g_1$ and $g_2$ (as defined in \eqref{obj1_abs} and \eqref{obj2_abs} in Section~\ref{sec:mathematical-formulation}) which the inclusion of this order into the period would bring with it.
	If the delta cost is smaller than zero (i.e.\ including the order improves the objectives), it is added to a list of suitable orders.
	The orders from the head of the sorted list are processed in this way until the suitable order list has size $\frac{k}{n}$ (i.e.\ the average number of orders for each period) or there are no orders left.
	\item Afterwards, if the list is not empty, select randomly one of the $r$ best suitable orders, plan it for period $i$, remove it from the sorted order list and go back to 1.
	\item Otherwise (if there was no suitable order) repeat with $i := i+1$.
\end{enumerate}
Finally, we check whether there are any orders left which could not be assigned due to the capacity limits.
If that is the case, they get assigned one by one to the period with maximal remaining capacity.
This way especially those periods which are not filled well get assigned the remaining orders and the probability of a hard constraint violation is minimized.
However, violating the maximum capacity constraint is allowed in this step because a complete assignment is required for the subsequent local search.

The parameter $r$ controls the random selection size of step 2.
If we set it to 1, the algorithm is deterministic.
When using values greater than 1, the construction heuristic is randomized, which can be useful for some local search techniques (e.g. GRASP).

\subsection{Neighborhood structures}

We devised two types of moves for generating different neighborhoods of a solution which will be introduced in the following subsections.
Furthermore, we briefly describe the delta evaluation approach and the methods of neighborhood exploration.

\subsubsection{Move-order neighborhood}

The move-order neighborhood (or simply move neighborhood) of a solution $s$ consists of all solutions whose only difference to $s$ is that one order has been moved to a different period.
Figure~\ref{fig:move_neighborhood_example} visualizes such a move.
The figure on the left shows the leveling objective per product type before the move and on the right side we can see the result of applying the move.
Order 2 is moved from $P2$ to $P3$ which yields in this case a better solution.

Enumerating the move neighborhood involves iterating over $k$ orders for each of $n-1$ possible target periods, i.e.\ the neighborhood size is exactly $k \cdot (n-1)$.

\begin{figure}
	\includegraphics[width=\textwidth]{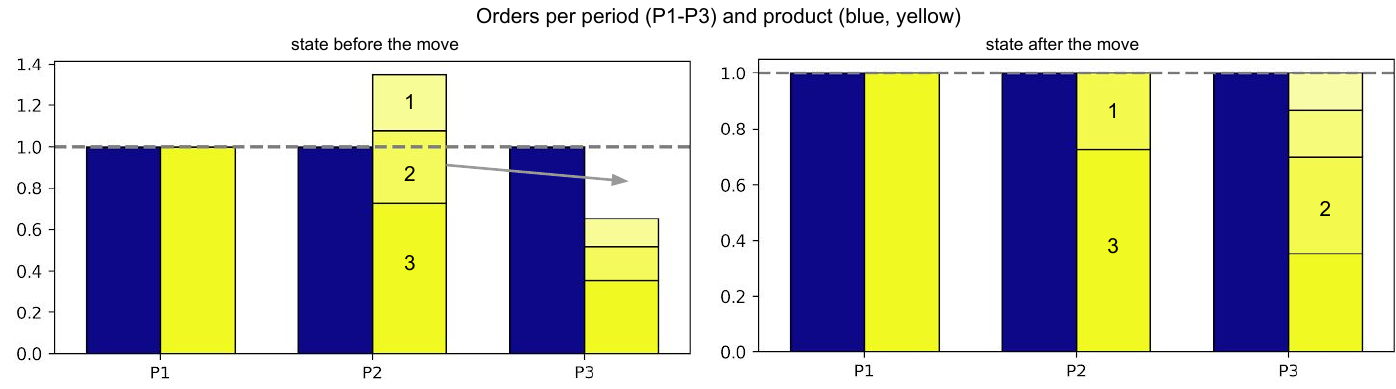}
	\centering
	\caption{Example of a move-order move: solutions before (left) and after (right)}
	\label{fig:move_neighborhood_example}
\end{figure}

\subsubsection{Swap orders neighborhood}

The swap-orders neighborhood (or simply swap neighborhood) of a solution $s$ consists of all solutions $s'$ whose only difference to $s$ is that two orders not assigned to the same period in $s$ appear with swapped period assignments in $s'$.
Figure~\ref{fig:swap_neighborhood_example} visualizes such a move.
Order 1 is swapped with order 2 which in this case again yields a better solution.

Enumerating the swap neighborhood involves iterating over all pairs of orders not assigned to the same period. Hence the neighborhood size is in $O(k^2)$.

\begin{figure}
	\includegraphics[width=\textwidth]{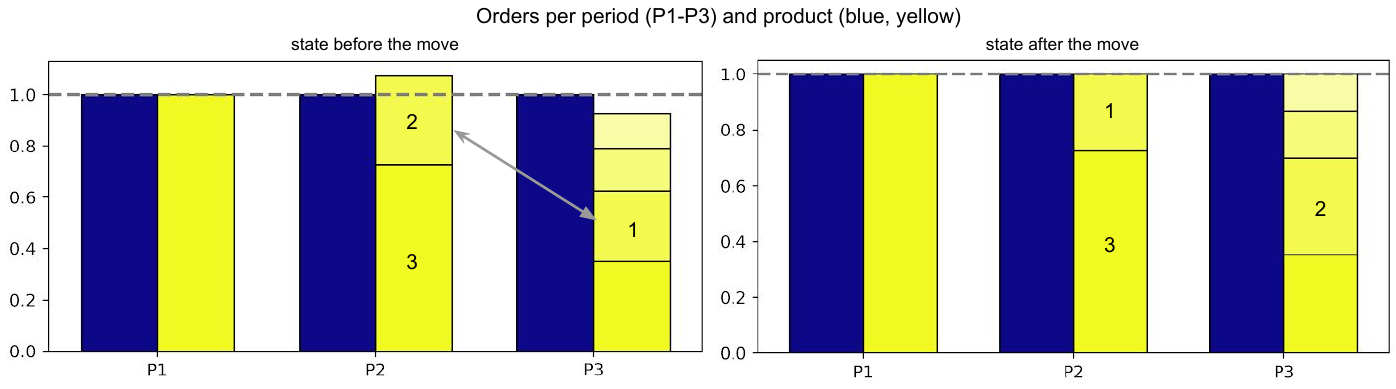}
	\centering
	\caption{Example of a swap-orders move: solutions before (left) and after (right)}
	\label{fig:swap_neighborhood_example}
\end{figure}

\subsubsection{Neighborhood exploration}
\label{sec:neighborhood_traversal}

We investigated three types of neighborhood exploration:
\begin{itemize}
	\item \textbf{First Improvement}: Generate and evaluate moves until the point where the first move is found who would improve the current solution.
	In order to prevent a bias towards the start of our neighborhood (e.g. the first orders in our input) the neighborhood traversal is performed in a cyclic way.
	That is, instead of starting every time at the same point we start right after the position where we found the first improving move the last time and search until either an improving move is found or we arrive again at the point where we started.
	\item \textbf{Best Improvement}: Generate and evaluate the complete neighborhood of a solution and select the move which leads to the biggest improvement.
	Ties are broken randomly.
	\item \textbf{Random Neighbor}: Generate and evaluate a random neighbor of the given solution.
\end{itemize}

\subsubsection{Move evaluation}

In order to explore a neighborhood systematically, we need to be able to compare moves with respect to their quality.
Given two moves $a$ and $b$, the first criterion to check is the number of hard constraint violations which each of them introduces or resolves.
If $a$ introduces fewer or resolves more of them we say that $a$ is better than $b$.
Otherwise -- if the number of hard constraint violations is equal -- we compare by selecting the one which has the lower move cost, which is defined as the change of the current solution's objective value if we would perform this move.

To avoid costly complete evaluations of whole solutions we propose a delta evaluation that efficiently evaluates how much the objective value changes for a given move.
The delta evaluation implementations for the two move types both use the same primitive for evaluating the cost of moving one order to a different period.
When performing swaps, we calculate the cost of moving order one to the period of order two, the cost for moving order two to the period of order one and compensate the error which results from assuming in both calculations that the respective other order remains unchanged.
The delta cost of moving an order is calculated for the three objective function components separately:
\begin{enumerate}
	\item For the leveling objective we only need to keep track of the planned production volume for each period, so that we can calculate the effect the move on the difference to the target value.
	\item For the per-product leveling objective we can do the same thing, given that we keep track of the planned production volume for each period and product.
	\item The priority objective is the hardest and most time-consuming part of delta evaluation because moving an order from period $i$ to $j$ can introduce or resolve inversions between the moved order and every order assigned to a period between $i$ and $j$.
	When the number of orders is very large it is inefficient to iterate over all such orders and perform comparisons because we need to do that for every candidate move.
	Our idea for optimizing this evaluation is based on the insight that the only thing we care about when moving an order past a period is the number of orders in that period which have smaller and larger priorities, respectively, not the actual priority values.
	Therefore, we maintain the priority values of all orders assigned to a certain period in a sorted list (one for each period), so that we can efficiently retrieve via binary search how many orders have smaller / larger priorities than the order which we currently want to move.
\end{enumerate}
The delta cost of the three objective function components is aggregated to a single value by the usual formula for the objective value \eqref{eq:objective_function}.

\subsection{Algorithms}
\label{sec:algorithms}

In this section we present details of the metaheuristic local search methods which we investigated for solving the \gls{plp}, namely the simple and deterministic \gls{vnd} as well as Simulated Annealing.

\subsubsection{Variable Neighborhood Descent}
\label{sec:vnd}

\gls{vnd} is a deterministic local search technique which can be seen as an extension of hill climbing to multiple neighborhoods.
The general idea is to go on to the next neighborhood if the current one gets stuck in a local optimum and return to the first one as soon as a further improvement is found.
The selection of an improving move in the neighborhood is usually done by using a deterministic exploration technique, i.e.\ either first or best improvement.
Algorithm~\ref{alg:vnd} shows the details using pseudo code, as it is given in the Handbook of Metaheuristics~\cite{hansen_variable_2010}.
\begin{algorithm}
	\SetAlgoNlRelativeSize{-1}
	
	\KwData{$initialSolution$, neighborhoods $\mathcal{N}_1, \ldots, \mathcal{N}_k$, $timeLimit$, $iterationLimit$}
	\KwResult{a solution at least as good as $initialSolution$}
	
	$currentSolution \leftarrow initialSolution$\;
	$iterationCount \leftarrow 1$\;
	$j \leftarrow 1$\;
	
	\While{$j \leq k$ \textbf{and}
		$\lnot$ out of time \textbf{and}
		$iterationCount \leq iterationLimit$}{
		$bestMove \leftarrow$ select a neighbor of $currentSolution$ w.r.t.\ $\mathcal{N}_j$\;
		\eIf{$bestMove$ is an improvement}{
			$currentSolution \leftarrow$ doMove($bestMove$)\;
			$j \leftarrow 1$\;
		}
		{
			$j \leftarrow j + 1$\;
		}
		
		$iterationCount \leftarrow iterationCount + 1$\;
	}
	
	\KwRet $currentSolution$\;
	
	\caption[Variable Neighborhood Descent]{Variable Neighborhood Descent}
	\label{alg:vnd}
\end{algorithm}

The idea of using multiple neighborhoods is based on the following insights~\cite{hansen_variable_2010}:
\begin{itemize}
	\item A local optimum w.r.t.\ one neighborhood structure is not necessarily a local optimum w.r.t.\ another.
	\item A global optimum is a local optimum w.r.t.\ all possible neighborhood structures.
\end{itemize}
That implies it is beneficial to use several complementary neighborhoods and try to escape local optima of one neighborhood by switching to another.

\subsubsection{Simulated Annealing}
\label{sec:simulated-annealing}

Simulated Annealing is a metaheuristic optimization method introduced by \cite{kirkpatrick_optimization_1983}.
It resembles the physical process of annealing in metallurgy insofar as both methods use a cooling schedule in order to control the amount of random movements in the process, which in theory allows for convergence to the optimal state.
Even though convergence to the optimal solution is usually not achieved in practical settings, Simulated Annealing is still one of the most widely used metaheuristic optimization methods.

Given an initial solution, a set of neighborhoods $\mathcal{N}_i$ with associated probabilities $p_i$, the starting temperature $t_{max}$, minimum temperature $t_{min}$, number of iterations per temperature $w$, time limit and iteration limit the version of Simulated Annealing we propose works as shown in Algorithm~\ref{alg:sa}.
\begin{algorithm}
	
	\SetAlgoNlRelativeSize{-1}
	
	\KwData{$initialSolution$, neighbohoods $\mathcal{N}_i$ with probabilities $p_i$, $t_{max}$, $t_{min}$, iterations per temperature $w$, $timeLimit$, $iterationLimit$}
	\KwResult{a solution at least as good as $initialSolution$}
	
	$currentSolution \leftarrow initialSolution$\;
	$bestSolution \leftarrow currentSolution$\;
	$t \leftarrow t_{max}$\;
	
	\While{$t \geq t_{min}$ \textbf{and} $\lnot$ time limit reached \textbf{and} $\lnot$ iteration limit reached}{
		\ForEach{$j \in 1, \ldots, w$}{
			$\mathcal{N} \leftarrow$ choose one of neighborhoods $\mathcal{N}_i$ according to probabilities $p_i$\;
			$m \leftarrow$ select a random move out of $\mathcal{N}(currentSolution)$\;
			\If{Accept($m$, $t$)}{
				$currentSolution \leftarrow$ Apply($m, currentSolution$)\;
				\If{$currentSolution$ is better than $bestSolution$}{
					$bestSolution \leftarrow currentSolution$\;
				}
			}  
		}
		$t \leftarrow$ Cool-Down($t$)\;
	}
	
	\KwRet $bestSolution$\;
	
	\caption[Simulated Annealing for the \gls{plp}]{Simulated Annealing}
	\label{alg:sa}
\end{algorithm}

The pseudo code makes use of two functions \textbf{Accept}, standing for the acceptance criterion, and \textbf{Cool-Off}, defining the cooling schedule, which we discuss in the following:
\begin{itemize}
	
	\item \textbf{Acceptance Criterion:}
	We use the metropolis criterion as acceptance function, which was introduced in the original paper by \cite{kirkpatrick_optimization_1983}.
	The probability of acceptance $P(i \Rightarrow j)$ of a move from solution $i$ to solution $j$ (for the case of minimization), with $f(x)$ standing for the objective value of solution $x$, can be defined as follows:
	\begin{equation}
	P(i \Rightarrow j) = \begin{cases}
	1, 										& \text{if } f(j) \leq f(i).\\
	exp\Big(\frac{f(i) - f(j)}{t}\Big), 	& \text{otherwise}.
	\end{cases}
	\label{eq:metropolis}
	\end{equation}
	If the candidate solution $j$ is at least as good as the current solution $i$, it is accepted unconditionally.
	Otherwise it is accepted with a probability which is decreasing exponentially as a function of the negative delta cost divided by the current temperature.
	That means, if a candidate solution is much worse than the current one it will be accepted with a lower probability than a solution which is just a little bit worse.
	
	\item \textbf{Cooling schedule:}
	The temperature is decreased during the search process by means of a cooling schedule which is usually a geometric row.
	In our case it depends on the cooling rate $\alpha$ and the iterations per temperature level $w$.
	The function \textbf{Cool-Down()} reduces the temperature after every $w$ iterations by the following formula:
	\begin{equation}
	t_i = \alpha \cdot t_{i-1}
	\end{equation}
	
	We want to stress now briefly how $\alpha$ and $w$ interact.
	Assuming we are given an iteration limit $l$, the initial temperature $t_{max}$ and the final temperature $t_{min}$ there exist many different options to reach $t_{min}$ after $l$ iterations, namely all combinations of $\alpha$ and $w$ such that $t_{min} = \alpha^n\cdot t_{max}$ where the number of temperature steps $n = \left\lfloor\frac{l}{w}\right\rfloor$.
	Two examples of schedules following that formula with $l = 30000$, $t_{max} = 1$ and $t_{min} = 0.001$ are depicted in Figure~\ref{fig:cooling-schedules}.
	Please observe that for both options depicted in the figure the temperature at each time is approximately the same as the different step sizes and widths compensate each other.
	Therefore, it is sufficient to fix the cooling rate $\alpha$ when tuning the parameters of Simulated Annealing and let the cooling schedule be determined only by the variation of $w$.
	
	\begin{figure}
		\centering
		\includegraphics[width=0.5\textwidth]{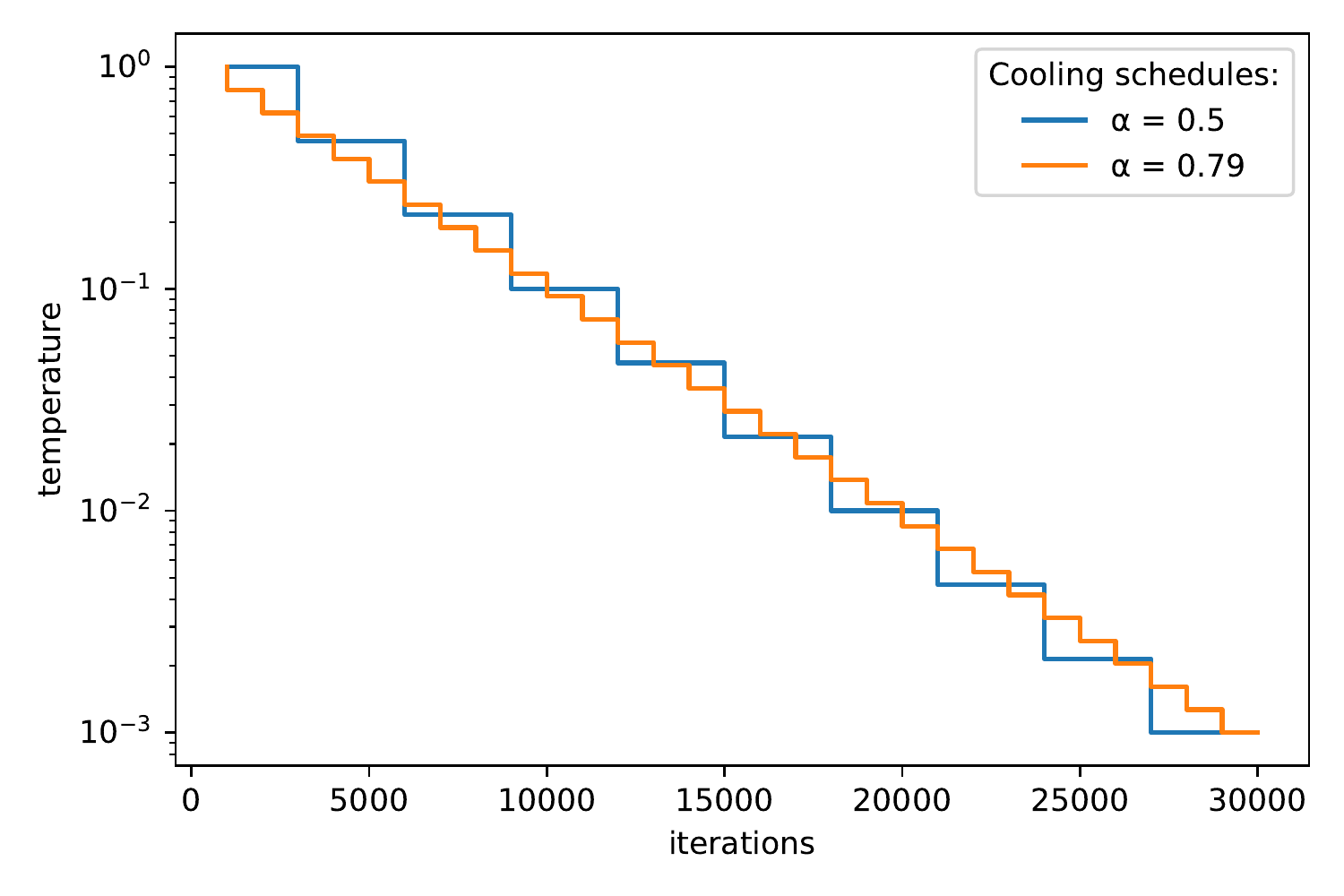}
		\caption[Two cooling schedules with different cooling rates]{Two cooling schedules with different cooling rates and iterations per temperature but identical start and end points}
		\label{fig:cooling-schedules}
	\end{figure}
	
	If we do not know the number $l$, we can also derive a formula which relates two cooling schedules $(\alpha_1, w_1)$ and $(\alpha_2, w_2)$ that have the same slope:
	\begin{equation}
	\frac{w_1}{w_2} = \frac{\log \alpha_1}{\log \alpha_2} \label{eq:cooling-rates}
	\end{equation}
	Using this relationship one can construct alternative cooling schedules which decrease equally fast on average.
	
\end{itemize}

\section{Experimental Evaluation}
\label{sec:evaluation}

In this section we evaluate the practical contributions of our work and provide answers to the questions which have been raised.
As the \gls{plp} is a new problem we initially elaborate on the problem instances and propose two instance generation procedures.
Next we describe properties of the test set, define parameters and describe the processing environment.
After that we turn towards the actual evaluation and look at the \gls{mip} model in detail.
Ultimately the metaheuristic approaches are extensively evaluated.

\subsection{Problem Instances}

The problem we describe emerges from a real-life use case of our industrial partner which also provided us some data from the production system.
In total we received 27 \gls{plp} instances which all have 20 periods, 4 to 8 product types and 79 to 1585 orders.
This set of instances will be called from now on $R_1$.
As these instances do not suffice for a thorough evaluation and we do not want to restrict ourselves to their size, we designed also two random instance generation procedures which are described in the following.

\subsubsection{Perfectly solvable instances}
We devised a method of generating instances which allow for a perfectly balanced solution with zero cost, that we know from the construction process.
That is, of course, a restriction of generality, but it is extremely useful as a means of evaluating the optimality gap for large instances which would otherwise be impossible as we have currently no way of solving them exactly with usual compute resources.
Despite the existence of a perfectly balanced solution with no priority inversions the instances are still not easy to solve to optimality, at least not as long as you don't provide the information of perfect realizability to the solvers.

The instance generation process relies on the subroutine for random integer partitioning shown in Algorithm~\ref{alg:integer_partitioning}.
It takes as arguments the integer to partition, the number of partitions and a minimum value for each partition.
The main idea is to represent the number $n$ as an array of $n - k \cdot minV$ zeros and then inserting $k-1$ ones at random positions.
In the resulting array an integer partition of the number $n - k \cdot minV$ into $k$ parts can be found by looking at the number of zeros between every two neighboring ones.
Finally we add $minV$ to every element of the result array to obtain the requested partition with minimum value.

\begin{algorithm}
	\caption{Integer partitioning algorithm}
	\label{alg:integer_partitioning}
	
	\KwData{\textit{n}, \textit{k}, \textit{minV}}
	\KwResult{An array with k integers whose sum equals n, each of which being $\geq minV$}
	
	\textbf{let} \textit{array} $\leftarrow$ an array consisting of $n - (k \cdot \textit{minV})$ zeros\;
	Insert $k-1$ ones into \textit{array} at random positions\;
	\textbf{let} \textit{spaces} $\leftarrow$ number of zeros between the ones in \textit{array}\;
	add \textit{minV} to every element of \textit{spaces}\;
	\textbf{return} \textit{spaces}\;
\end{algorithm}

Using this partitioning algorithm, Algorithm~\ref{alg:instance_generation} defines the procedure for generating random instances with a fixed number of orders, periods and products.
First the total number of orders is partitioned into one part for each period where each part has to have at least as many orders as we have products.
This is important because every product needs to meet its target in every period in order to achieve an objective value of 0.
The same thing is done for each period to decide upon the number of orders for each product type.

Next we draw the overall target value for the production volume (which is the same for each period) by taking the desired \textit{avgDemandPerOrder} and multiplying with the average number of orders per period plus a random deviation of at most $10\%$.
Then we partition that value into one part for each product, which is the demand for each product per period.

Finally we need to partition the demand for each product, which we decided upon in line 4, into the number of orders for each period and product which we calculated in line 2.
The priorities must be chosen such that no inversion can exist, which is achieved by assigning each period a range of priority values decreasingly such that the ranges do not overlap, and choosing for each order randomly one of the allowed values.
From this data the order and product type list can be built, which completes the instance.
The optimal solution is known as well from the construction process.

\begin{algorithm}
	\caption{Procedure for the creation of perfectly solvable instance}
	\label{alg:instance_generation}
	
	\KwData{\textit{m}, \textit{n}, \textit{k}, \textit{avgDemandPerOrder}}
	\KwResult{A realizable instance with $m$ products, $n$ periods, $k$ orders and the optimal solution}
	
	\textbf{let} \textit{ordersPerPeriod} $\leftarrow$ partition($k, n, m$)\;
	\textbf{let} \textit{ordersPerPeriodAndProduct} $\leftarrow$ partition(\textit{ordersPerPeriod}[$o$], $m$, 1) for every order $o$\;
	\textbf{let} \textit{plannedDemand} $\leftarrow$ $\frac{k \cdot \mathit{avgDemandPerOrder}}{n} \pm 10\%$\;
	\textbf{let} \textit{plannedDemandPerProduct} $\leftarrow$ partition(\textit{plannedDemand}, $m$, max(\textit{ordersPerPeriodAndProduct}))\;
	\textbf{let} \textit{orderDemands} $\leftarrow$ partition(\textit{plannedDemandPerProduct}[$p$], \textit{ordersPerPeriodAndProduct}[$t,p$], 1) for every period $t$ and product $p$\;
	\textbf{let} \textit{allowedPriorities} $\leftarrow$ for each period $n$ a distinct set of priorities s.t. they decrease with increasing $n$\;
	\textbf{let} \textit{orderPriorities} $\leftarrow$ choose for each order one of the priorities which are allowed according to the period of the order\;
	build the list of orders and products and shuffle them\;
	assign random product names\;
	\textbf{return} a new solution from the list of orders and products and the optimal solution\;
\end{algorithm}

Using Algorithm~\ref{alg:instance_generation} we generated 1000 instances, sampling the parameters for each one independently as follows:
The number of orders $k$ is chosen from $100\ldots4000$, the number of periods $n$ from $2\ldots80$, the number of products $m$ from $1\ldots20$ and \textit{avgDemandPerOrder} from $5\ldots500$.
The resulting set of instances is subsequently called $R_2$.



\subsubsection{Random instances}

We also devised a second instance generation procedure where the optimal solutions are not known by design and we can't even guarantee that there exists a feasible one, which is surely a more practice-oriented approach.
The instances are designed to share some properties of the 27 realistic instances:
\begin{itemize}
	\item There exist only a limited number $l \ll k$ of different order demand values. This means we frequently see repeated orders which may have different priorities though.
	\item Orders of different products draw their demand data from different distributions. Whereas product $a$ may have demand values between $0$ and $1000$, product $b$ may have it between $0$ and $5000$.
	\item Sometimes there exist product types whose number of orders is smaller than the number of periods which implies that the demand for some periods will exceed the target while for others it must be zero.
\end{itemize}

The actual generation process is very simple. Given a number of orders $k$, periods $n$ and product types $m$ the algorithm works as follows:
\begin{enumerate}
	\item Partition the number of orders $k$ into $m$ parts $c_1 \ldots c_m$.
	\item Choose the maximum priority of all orders $p_{max} \in [1; 3n]$
	\item Choose $1-50$ allowed demand values $d \in [1; random(1000-5000)]$ for each product $p$, named $D_p$.
	\item For each product $p \in [1; m]$, generate $c_p$ orders, choosing the demand from the set $D_p$ and the priority from $[0; p_{max}[$.
\end{enumerate}

Using this procedure, we generated the instance set $R_3$ consisting of 1000 instances by sampling the parameters randomly as it has been done above with the other procedure.
The number of orders $k$ is chosen from $100\ldots4000$, the number of periods $n$ from $2\ldots80$ and the number of products $m$ from $1\ldots20$.
Furthermore, we generated a set of 10 small instances, named $R_4$, where the number of orders $k$ is chosen from $30\ldots100$, the number of periods $n$ from $5\ldots20$ and the number of products $m$ from $1\ldots5$.

\subsection{Experimental Setting}

The instances which are described above are split into training and test set so that the parameter tuning is not executed on the same instances as the validation.
The test set consists of the whole set of realistic instances $R_1$, 50 instances of $R_2$, 50 instances of $R_3$ and all 10 instances in $R_4$.
Table~\ref{tab:instance-sets} provides an overview of the instance sets and the way they were split.

\begin{table}[]
	\centering
	\caption{Overview over the different instance sets and the split into training and test set}
	\label{tab:instance-sets}
    \footnotesize
	\begin{tabular}{@{}llrllll@{}}
		\toprule
		& Name                                & Count & Description                           & Training Set Selection & Test Set Selection &  \\ \midrule
		$R_1$ & \texttt{realistic\_instance}        & 27    & Realistic instances                   & -                      & 01-27              &  \\
		$R_2$ & \texttt{randomly\_perfect}          & 1000  & Randomly generated perfectly solvable & 0001-0950              & 0951-1000          &  \\
		$R_3$ & \texttt{randomly\_generated}        & 1000  & Randomly generated                    & 0001-0950              & 0951-1000          &  \\
		$R_4$ & \texttt{randomly\_generated\_small} & 10    & Randomly generated, small             & -                      & 1-10               &  \\ \bottomrule
	\end{tabular}
\end{table}

We chose to build the test set out of four different instance types because we wanted to make sure that our algorithms can cope with different characteristics and sizes.
The size distribution is shown by Table~\ref{tab:test_set} which states for each instance parameter --- $k$ (number of orders), $m$ (number of product types), and $n$ (number of periods) --- the minimum, maximum and mean value on each part of the test set.
The smallest instances are $R_4$, followed by the realistic instances $R_1$.
The instances coming from $R_2$ and $R_3$ are much larger on average as we want to evaluate also the scalability of our algorithms.
\begin{table}[]
	\centering
	\footnotesize
	\caption{Minimum, maximum, mean and standard deviation of number of orders $k$, number of product types $m$ and number of periods $n$ for every part of the test set}
	\label{tab:test_set}
	\begin{tabular}{llrrrr}
		\toprule
		&       &  min &   max &    mean &     std \\
		\textbf{Parameter} & \textbf{Instance Set} &      &       &         &         \\
		\midrule
		\multirow{4}{*}{\textbf{k}} & \textbf{$R_1$} &   79 &  1585 &  307.19 &  412.56 \\
		& \textbf{$R_2$} &  105 &  3896 & 1595.86 &  954.09 \\
		& \textbf{$R_3$} &  112 &  3991 & 2076.76 & 1207.02 \\
		& \textbf{$R_4$} &   34 &    98 &   61.20 &   19.70 \\
		\cline{1-6}
		\multirow{4}{*}{\textbf{m}} & \textbf{$R_1$} &    4 &     8 &    6.93 &    1.24 \\
		& \textbf{$R_2$} &    1 &    19 &    8.82 &    5.50 \\
		& \textbf{$R_3$} &    1 &    19 &    9.04 &    5.48 \\
		& \textbf{$R_4$} &    1 &     4 &    2.80 &    1.03 \\
		\cline{1-6}
		\multirow{4}{*}{\textbf{n}} & \textbf{$R_1$} &   20 &    20 &   20.00 &    0.00 \\
		& \textbf{$R_2$} &    4 &    78 &   39.50 &   22.48 \\
		& \textbf{$R_3$} &    4 &    77 &   39.26 &   22.04 \\
		& \textbf{$R_4$} &    7 &    18 &   10.90 &    4.04 \\
		\bottomrule
	\end{tabular}
\end{table}
The set of test instances is publicly available on the following web-page:
\href{https://dbai.tuwien.ac.at/staff/jvass/production-leveling}{https://dbai.tuwien.ac.at/staff/jvass/production-leveling}.

We use the absolute-difference-based objective function~\eqref{eq:objective_function} to produce all the subsequent results.
The reason is that in our setting the advantages over the objective with squares outweigh the disadvantages, especially because it enables us to solve much more instances exactly.
The evaluation of the metaheuristics could just as well be done using the quadratic objective function~\eqref{eq:objective_function_squared} but as we want to compare to the exact results we use formula \eqref{eq:objective_function} as well.
Hard constraint violations are not part of the objective function but undergo a special treatment where possible by reporting the number of violated constraints as a separate number or separate plot.
In some cases, e.g. statistical significance tests, we handle objective and constraint violations at once by adding them up.
Due to the small magnitude of the objective a penalization factor for hard constraint violations is not necessary.

As indicated during the problem statement, we worked out default values for the weights of the objective function components $a_1, a_2$ and $a_3$ in cooperation with our industrial partner, namely $1$, $1$ and $\frac{1}{3}$, respectively.
All experiments of the evaluation are using this weighting.

Wherever nothing different is stated the algorithm parameters are defined as follows:
\begin{itemize}
	\item The greedy heuristic has only one parameter, $r$, for controlling the amount of randomness, which we set to 1 (i.e.\ deterministic) for all the experiments.
	The parameter is only necessary for some local search techniques like GRASP which would require a randomized construction heuristic.
	\item For \gls{vnd} the move neighborhood is used first because it can be enumerated very quickly.
	Only when no improving move can be found any more the larger swap neighborhood gets employed.
	This ordering leads to a much quicker termination because the first neighborhood is searched much more often than the second.
	As the neighborhood exploration strategy we use Next Improvement with restart at the last position (as defined in Section~\ref{sec:neighborhood_traversal}), which showed at least equal performance to Best Improvement in preliminary experiments.
	\item The parameters of Simulated Annealing are tuned automatically.
	The concrete process and the results get introduced later on.
	\item The \gls{mip} model is executed using Gurobi Optimizer 8.1.1~\cite{gurobi} on a single thread and otherwise the default settings.
\end{itemize}

All experiments were conducted on a computing cluster with with 10 identical nodes, each having 24 cores, an Intel(R) Xeon(R) CPU E5-2650 v4 @ 2.20GHz and 252 GB of memory, running Ubuntu 16.04.1 LTS.
The metaheuristic algorithms are implemented in C\# and executed using Mono 4.2.1.

\subsection{Evaluation of the MIP model}
\label{sec:mip-eval}

In this subsection we will examine the \gls{mip} model presented in Section~\ref{sec:MIP} with respect to its empirical performance.
We first break down the results by the different instance sets of which the test set is composed.
Afterwards we will investigate how the instance size affects the solution quality.

First, we want to investigate how well each part of the test set can be solved using \gls{mip}.
For a description of the different parts please refer to Table~\ref{tab:test_set}.
Figure~\ref{fig:mip_status_grouped_by_set} visualizes the shares of optimally solved, feasibly but not optimally solved, infeasible and unsolved instances per group $R_1$ to $R_4$.
\begin{figure}
	\centering
	\includegraphics[width=.5\linewidth]{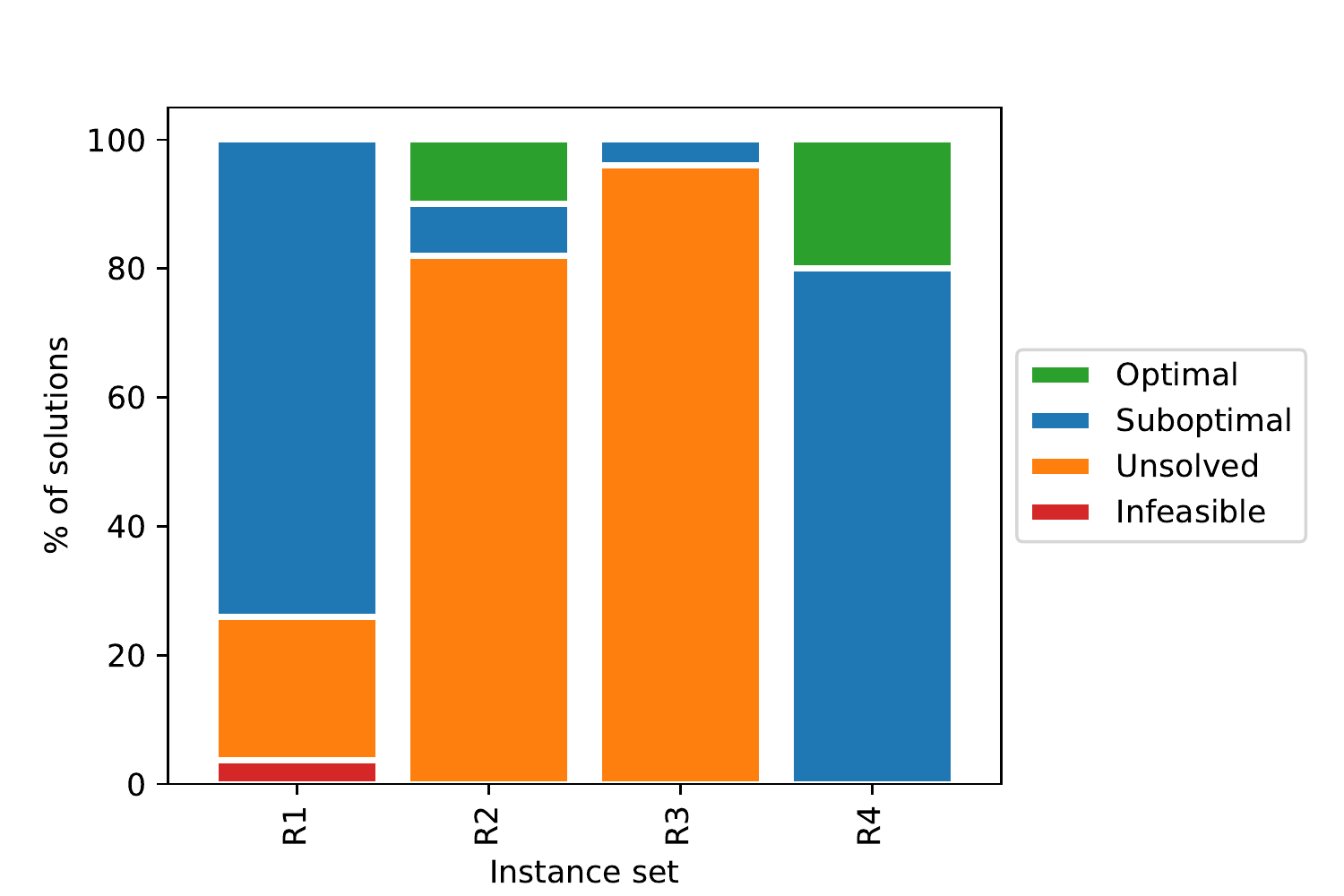}
    \caption{Share of solution statuses of \gls{mip} for each subset of the test set. Optimal means proven optimal. Suboptimal implies that an integer solution has been found but it has not been proven that it is optimal. Unsolved means that within the time limit no integer solution has been found and it is thus unclear whether there exists a feasible solution at all. Infeasible means that the solver proved that no feasible solution exists.}
    \label{fig:mip_status_grouped_by_set}
\end{figure}
The most noticeable difference between the sets is that in $R_2$ and $R_3$ the vast majority of the instances are unsolved while for the other two most of the instances are solved (but still not proven optimal).
Presumably, the reason for that is that most of the instances in these sets are very large.
An interesting fact is, though, that about 10 \% of the instances in $R_2$ could be solved to proven optimality but not a single one in $R_3$ even though the instance sizes of the two sets have been sampled from the same distribution.
One potential reason for that could be that the instances in $R_2$ are designed to have optimal solutions with objective value 0.
That should make optimality proofs easy for the solver once the optimal solution has been found because no part of the objective function can by negative.

For the instance sets $R_1$ and $R_4$ over $70 \%$ of the instances end up with some solution, which is not proven optimal.
We want to investigate how good these solutions are and use for that purpose the relative optimality gap with respect to the best lower bound.
It is calculated as the percentage corresponding to one minus the ratio between bound cost and incumbent cost.
Table~\ref{tab:mip-optimality-gap} shows the minimum, maximum and mean optimality gap as well as the standard deviation for all suboptimal solutions in $R_1$ and $R_4$.
With an average gap of only $3.96\%$ the realistic instance set $R_1$ is solved really well, so that the \gls{mip} model might be usable in practice when the instances are not too large and a runtime of one hour is not an issue.
On the other hand, $R_4$ has a low minimum and a high maximum gap as well as a large standard deviation.
That means that the randomly generated instances are quite difficult to solve using \gls{mip}, even though the ones in $R_4$ are mostly smaller than the realistic ones.

\begin{table}[]
	\centering
	\caption{Optimality gap of MIP for suboptimal instances in $R_1$ and $R_4$}
	\label{tab:mip-optimality-gap}
	\begin{tabular}{lrrrr}
		\toprule
		{} &        min &        max &       mean &       std \\
		\midrule
		\textbf{$R_1$} &  $0.99\%$ &  $11.65\%$ &  $3.96\%$ &  $2.55\%$ \\
		\textbf{$R_4$} &  $0.63\%$ &  $98.85\%$ & $31.14\%$ & $41.52\%$ \\
		\bottomrule
	\end{tabular}
\end{table}

Finally, we investigate in more detail how the instance size correlates with the results of the \gls{mip} model.
Figure~\ref{fig:mip_status_grouped_by_kmn} visualizes the solution statuses of all instances in the test set, grouped by the number of orders $k$, the number of products $m$ and the number of periods $n$, from left to right.
The most apparent relationship is a correlation between the number of orders $k$ and the percentage of unsolved instances.
While below 250 orders almost every instance has either been proven feasible or infeasible, the share of unsolved solutions increases drastically when increasing $k$.
When looking at the middle and right-hand-side figure, we can see that the share of unsolved instances is also increasing with increasing number of periods and product types but it starts already quite high in the smallest bin.
We can conclude that instances with 250 orders or less can be solved with a high probability by the \gls{mip} model, but there is no such bound which we could state on the number of periods or product types.
While increasing $n$ and $m$ clearly complicates the problem, making them small does not automatically make the problem easy to solve.

\begin{figure}
	\includegraphics[width=\textwidth]{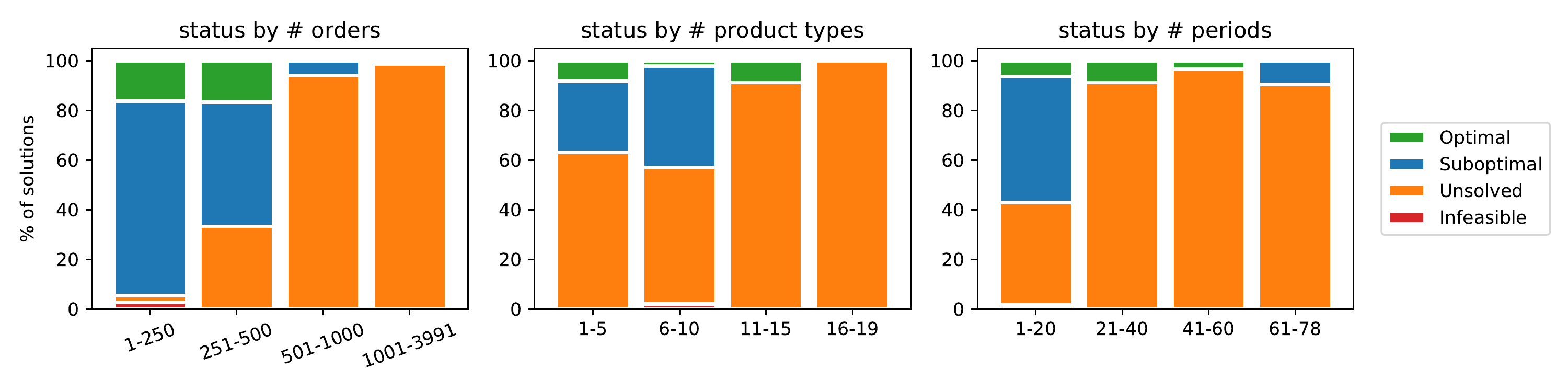}
	\caption{Solution statuses of \gls{mip} on the test set, grouped by value ranges of instance features. Optimal means proven optimal. Suboptimal implies that an integer solution has been found but it has not been proven that it is optimal. Unsolved means that within the time limit no integer solution has been found and it is thus unclear whether there exists a feasible solution at all. Infeasible means that the solver proved that no feasible solution exists.}
	\label{fig:mip_status_grouped_by_kmn}
\end{figure}

\subsection{Evaluation of metaheuristics}

As we have seen, the \gls{mip} formulation is not suitable for solving large instances, which is why we developed local search methods for the \gls{plp} as well.
In this subsection we will first deal with the automatic parameter tuning for Simulated Annealing and validate the claim that it is sound to fix the cooling rate.
Then we analyze benefits and shortcomings of our two approaches \gls{vnd} and Simulated Annealing in detail on the basis of results on our test set, comparing also against the greedy heuristic.
Thereafter we examine the sensitivity of Simulated Annealing to variations in the weighting of the neighborhoods.
Finally we examine how close the metaheuristic solutions get to the global optima by using dual bounds obtained through \gls{mip} and the perfectly solvable instance set $R_2$.

\subsubsection{Algorithm Configuration}

As described in Section~\ref{sec:algorithms}, Simulated Annealing depends on parameters whose setting has a huge influence on the algorithm's efficiency and effectiveness.
We deal with their configuration by means of \gls{smac}, an automatic algorithm configuration tool written in python.
It relies on Bayesian Optimization in combination with an aggressive racing mechanism in order to efficiently search through huge configuration spaces~\cite{lindauer_smac_2019}.

We applied \gls{smac} to tune the parameters of Simulated Annealing as it was presented above.
The set of instances which was use for the tuning can be found in the column \textit{Training Set Selection} of Table~\ref{tab:instance-sets}.
The parameter optimization was executed for 24 hours on 24 cores in parallel.
We used a time limit of five minutes per run and no iteration limit.
The cooling rate was not tuned but set to a value of $0.95$, which is not a restriction of generality as long as the number of iterations per temperature can still be adjusted (see Figure~\ref{fig:cooling-schedules}).
This claim will be verified in a separate experiment later on.

We tuned the initial temperature $t_{max}$, the number of iterations per temperature $w$ and the probability $p$ that the move neighborhood is used to generate the next random move (hence $1-p$ is the probability of the swap neighborhood).
Tuning the minimum temperature $t_{min}$ is not necessary because the results cannot get worse when Simulated Annealing is run until the time limit instead of aborting when the minimal temperature is reached.
Indeed, preliminary results showed that setting the minimum temperature to zero instead of using the tuning results of \gls{smac} improves results to a small but significant extent.
The configuration space with minimum and maximum values as well as the defaults and the tuning result is shown in Table~\ref{tab:config-space-sa}.

\begin{table}[]
	\centering
	\caption{Configuration space of Simulated Annealing}
	\label{tab:config-space-sa}
	\begin{tabular}{@{}llrrrr@{}}
		\toprule
		\textbf{Parameter} & \textbf{Type} & \textbf{Minimum} & \textbf{Maximum} & \multicolumn{1}{l}{\textbf{Default}} & \multicolumn{1}{l}{\textbf{Tuned}} \\ \midrule
		Iterations Per Temperature & integer & $10^3$ & $10^6$ & $10^3$ & $2.52 \cdot 10^{5}$\\
		Move Neighborhood Probability (\%) & integer & 0 & 100 & 50 & 40 \\
		Initial Temperature & real & 0.1  & 10.0 & 5.0  & 0.22 \\
		\midrule
		Minimum Temperature & real (fixed) & 0    & 0    & 0    & 0 \\
		Cooling Rate        & real (fixed)& 0.95 & 0.95 & 0.95 & 0.95  \\ \bottomrule
	\end{tabular}
\end{table}

\subsubsection{Experiments about fixing the cooling rate}

We claimed in Section~\ref{sec:simulated-annealing} that the cooling rate $\alpha$ could be set to a constant value because it was redundant as long as the number of iterations per temperature $w$ is free.
During algorithm configuration we did exactly that and set $\alpha \leftarrow 0.95$.
Now we want to verify this claim by means of an experiment.
We derive four more cooling schedules from the one defined by the result of parameter tuning whose temperature profile follows the same slope.
Then we benchmark each configuration on the whole test set ten times with different random seeds and take the median of the objective values and number of constraint violations for each instance.

In Section~\ref{sec:simulated-annealing} we already introduced an equation which allows to derive cooling schedules with equal average slopes.
We selected the alternative cooling rates $0.5$, $0.75$, $0.9$ and $0.99$ and computed the associated values of $w$.
A summary of the resulting cooling schedules is shown in Table~\ref{tab:cooling-schedules}.

\begin{table}[]
	\centering
	\caption{Five equivalent cooling schedules which have the same slope on average. The value for $w$ in the line with $\alpha = 0.95$ comes from parameter tuning and the rest has been derived so that the slope is does not change.}
	\label{tab:cooling-schedules}
	\begin{tabular}{@{}rr@{}}
		\toprule
		\multicolumn{1}{l}{\textbf{Cooling Rate $\mathbf{\alpha}$}} & \multicolumn{1}{l}{\textbf{Iterations per temperature $\mathbf{w}$}} \\ \midrule
		0.50 & 3412581 \\
		0.75 & 1416349 \\
		0.90 & 518723 \\
		\textit{0.95} & \textit{252533} \\
		0.99 & 49481 \\ \bottomrule
	\end{tabular}
\end{table}

Figure~\ref{fig:experiment-cooling-rate} shows on the left hand side a box plot for each of the schedules, each of them plotting the median objective value resulting from the derived schedule divided by the median objective value resulting from the original schedule, per instance.
The original schedule does, of course, not differ from itself, while the other schedules bring an improvement for some instances and worse results for others.
For the three higher values of $\alpha$ more than 50\% of the data is in the range $\pm 1\%$ and nearly all the rest in $\pm 2\%$ (except for a couple of outliers outside the plotting range).
The median shown by the box plots of the schedules $\alpha=0.9$ and $\alpha=0.99$ is almost exactly at 1 whereas the other two variants have median differences slightly higher than one and the whiskers reach farther, although the differences are still very small.
The right hand side of Figure~\ref{fig:experiment-cooling-rate} visualizes the same for the number of hard constraint violations, except that we do not divide but take the difference between the alternative schedules and the original one because the interpretation is more intuitive in this case.
The whole box plot except for some outliers is zero which means that for most of the instances there is no change in the number of hard constraint violations.
However, there are a few outliers going down to -1 which means that for these instances the alternative schedules have one violation less.
Compared to the 137 instances of the test set the number of outliers is very small though.

\begin{figure}
    \centering
    \begin{subfigure}{.49\textwidth}
        \centering
    	\includegraphics[width=\linewidth]{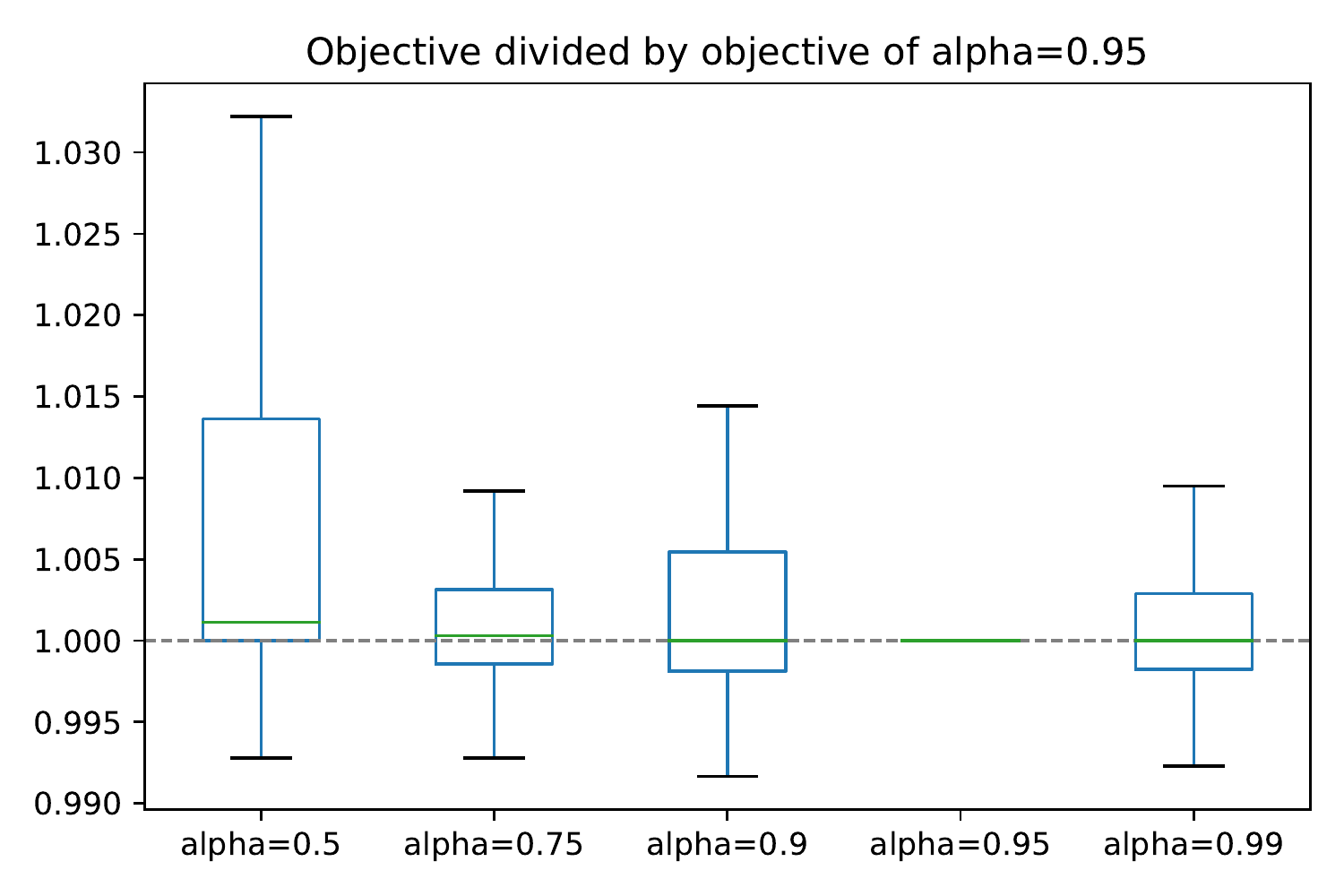}
    \end{subfigure}
    \begin{subfigure}{.49\textwidth}
        \centering
    	\includegraphics[width=\linewidth]{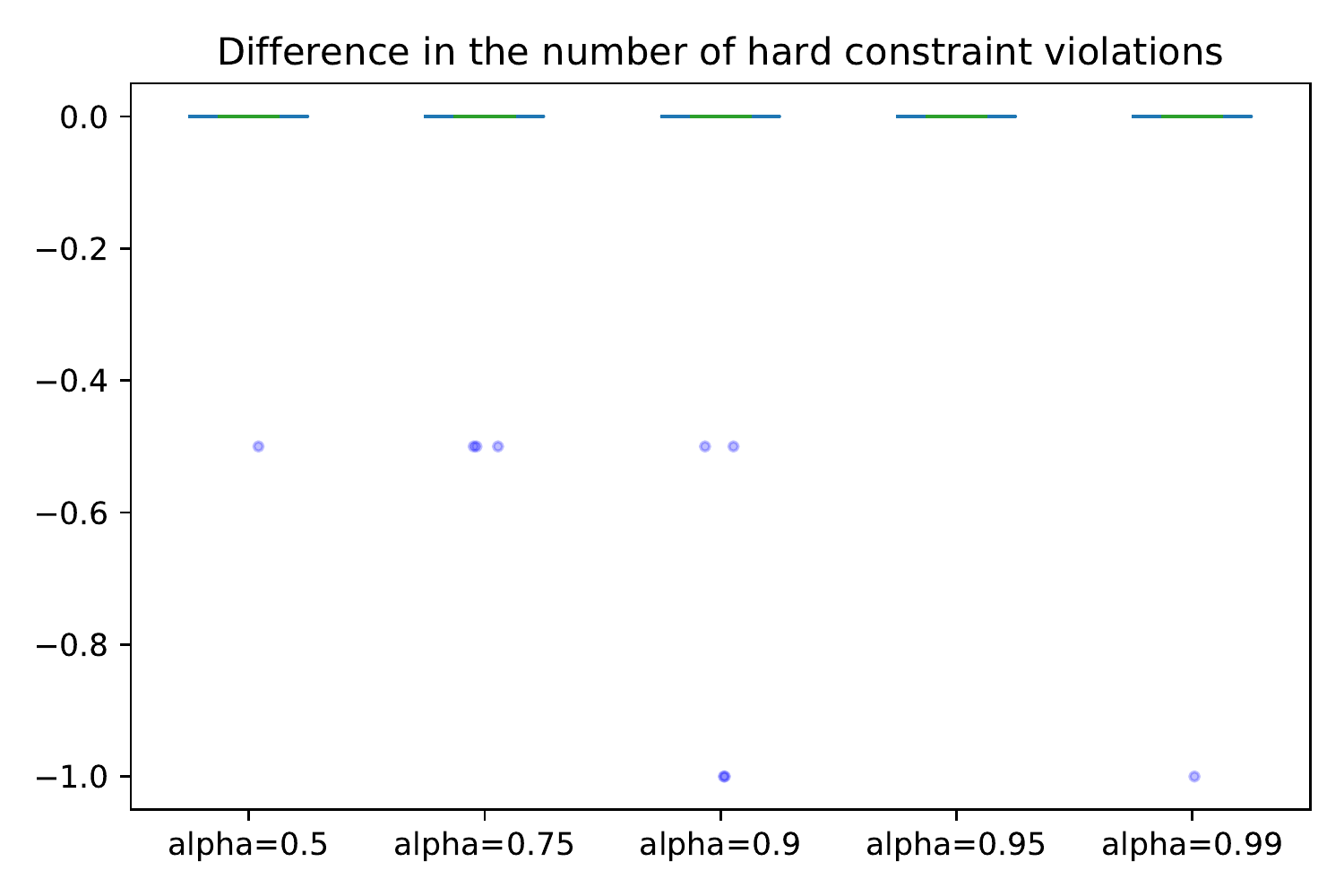}
    \end{subfigure}
	\caption{Results of the experiment regarding different cooling schedules. All the results are comparisons per instance against the schedule with $\alpha = 0.95$.}
	\label{fig:experiment-cooling-rate}
\end{figure}

In order to clarify whether the differences which we see in the median of $\alpha=0.5$ and $\alpha=0.75$ are statistically significant or not we conducted a Wilcoxon signed-rank test between the original schedule and each of the derived ones.
In order to take also the hard constraint violations into account their number was added to the objective value as a penalty.
The null hypothesis was that the median of the differences is zero and the alternative that it is different from zero.
We want to reject the null hypothesis in case we find a p-value which is smaller than $0.05$.
The result of the tests yielded a p-value of $2.6 \cdot 10^{-8}$ for $\alpha = 0.5$, $0.041$ for $\alpha = 0.75$, $0.966$ for $\alpha = 0.9$ and $0.26$ for $\alpha = 0.99$.
That implies that we must reject the null hypothesis for the schedules with $\alpha \leq 0.75$.
For the other two schedules the statistical test does not let us reject the null hypothesis that the differences which we see are the result of chance.

To sum up, the claim that we can change alpha without changing the result is valid in our setting as long as $\alpha$ is set high enough (i.e.\ in our experiment at least 0.9).
That implies that we were on the safe side when fixing it to 0.95 during parameter tuning.
For values $\alpha \leq 0.75$ there exists a tiny but statistically significant difference which corresponds to a median objective value which is about $0.1\%$, above the one of the default configuration.

\subsubsection{Comparison of metaheuristic techniques}

We want to compare now the presented heuristic and metaheuristic techniques for solving the \gls{plp}, namely the greedy algorithm presented in Section~\ref{sec:greedy}, and \gls{vnd} and Simulated Annealing which have been presented in Section~\ref{sec:algorithms}.
Therefore, these three algorithms were benchmarked on the test set with the usual settings.
For Simulated Annealing we conducted 10 runs to account for randomness in the search process and aggregated the runs by taking the median value of each measure.
In order to be able to compare objective values and hard constraint violations visually, we report for each instance the difference to the best solution we ever obtained using any method and time limit.
\footnote{This is a sound approach because the objective function is already normalized so that the instance size does not have an influence on the magnitude of the objective. Using a ratio instead of the difference, like in the previous experiment, is not possible here because the best known solution for the instances in $R_2$ have objective value 0 which yields a division by 0.}

The result is shown by Figure~\ref{fig:metaheuristic-comparison}:
To the left one can see the objective values of the three approaches.
The median difference between the greedy heuristic's objective values and the best known ones is about $0.07$.
Furthermore, we can see in the center figure that a considerable number of instances could not be solved without hard constraint violations by the Greedy heuristic.
Expressed in numbers, that's the case for 63 out of 137 instances or some $46\%$.
Compared to the Greedy, \gls{vnd} delivers solutions with much better objective values and fewer constraint violations, but there are still 33 solutions or $24\%$ where at least one capacity constraint is violated.
Simulated Annealing achieves the best median objective value of all methods and furthermore the fewest instances with constraint violations (22 out of 137, $16\%$).
The non-overlapping notches of the box plot in the left figure indicate that the difference to \gls{vnd} is significant.
The rightmost plot shows the solving time of the different methods in seconds.
The greedy heuristic needs always less than a second of time.
\gls{vnd} is also mostly fast, because the search continues only until a local optimum w.r.t.\ all neighborhood structures is found, which takes long only for the very largest of our instances.
Simulated Annealing uses always the complete available time because we don't stop at a minimum temperature in order to maximize the solution quality.

\begin{figure}
    \centering
    \begin{subfigure}{.33\textwidth}
        \centering
    	\includegraphics[width=\linewidth]{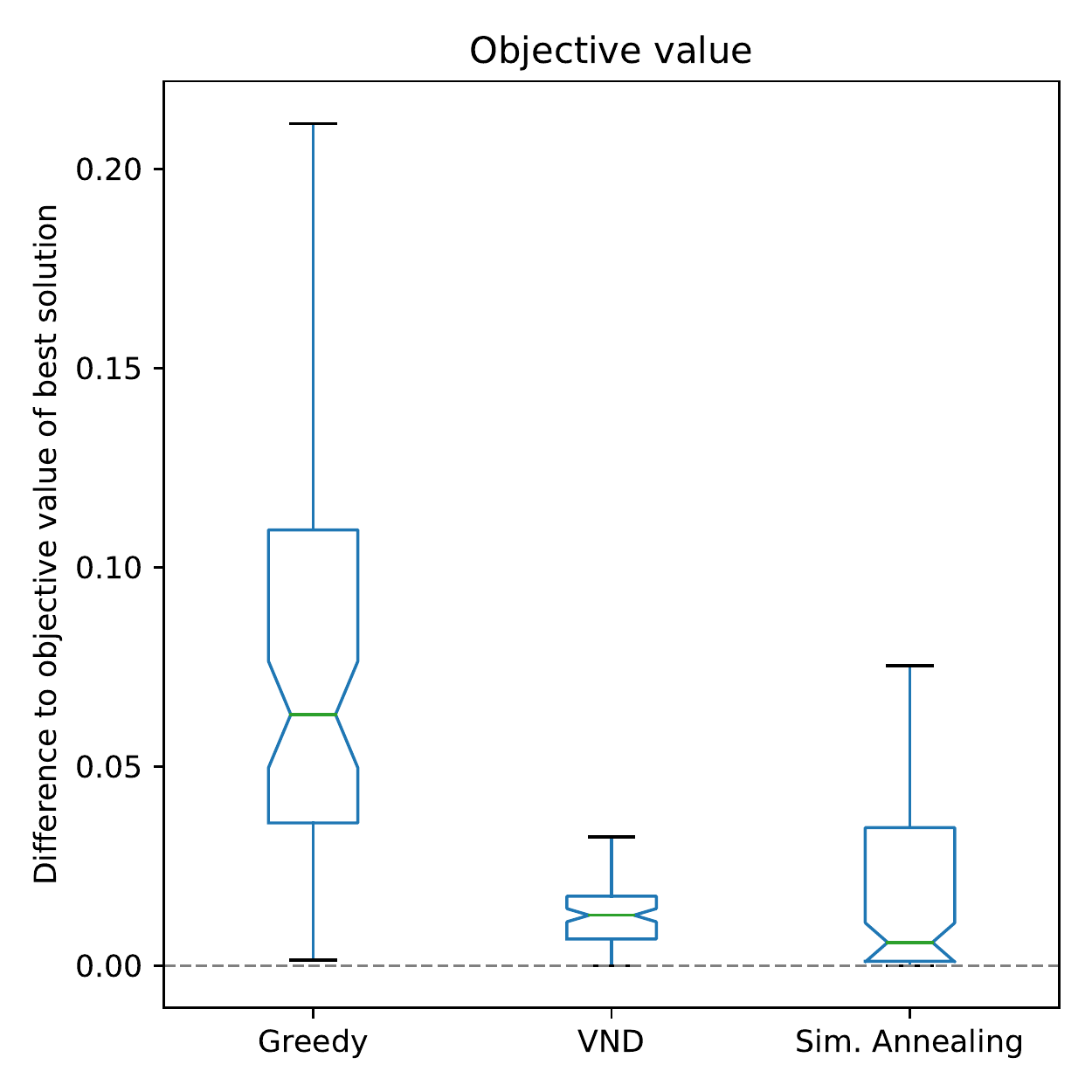}
    \end{subfigure}
    \begin{subfigure}{.33\textwidth}
        \centering
    	\includegraphics[width=\linewidth]{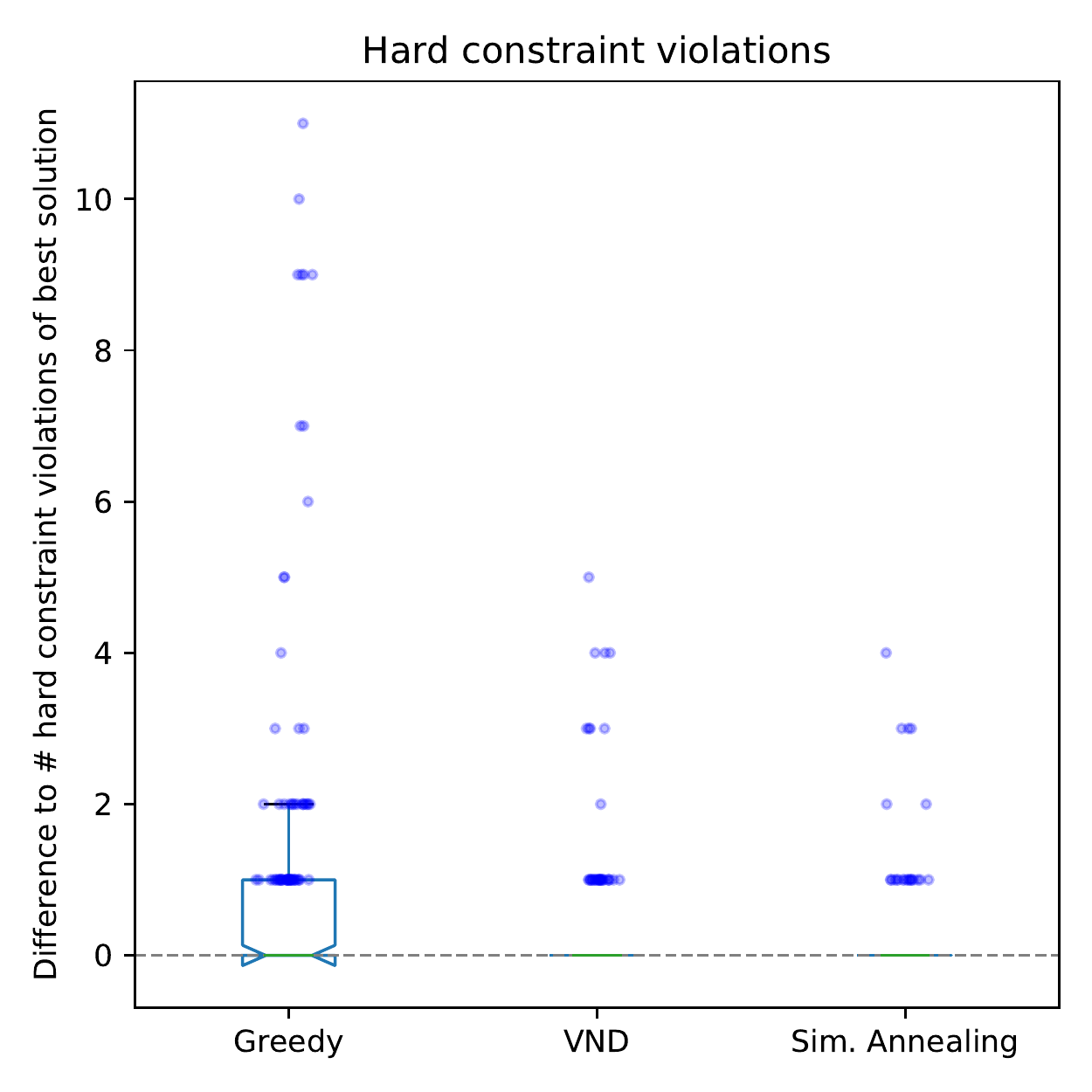}
    \end{subfigure}
    \begin{subfigure}{.33\textwidth}
        \centering
    	\includegraphics[width=\linewidth]{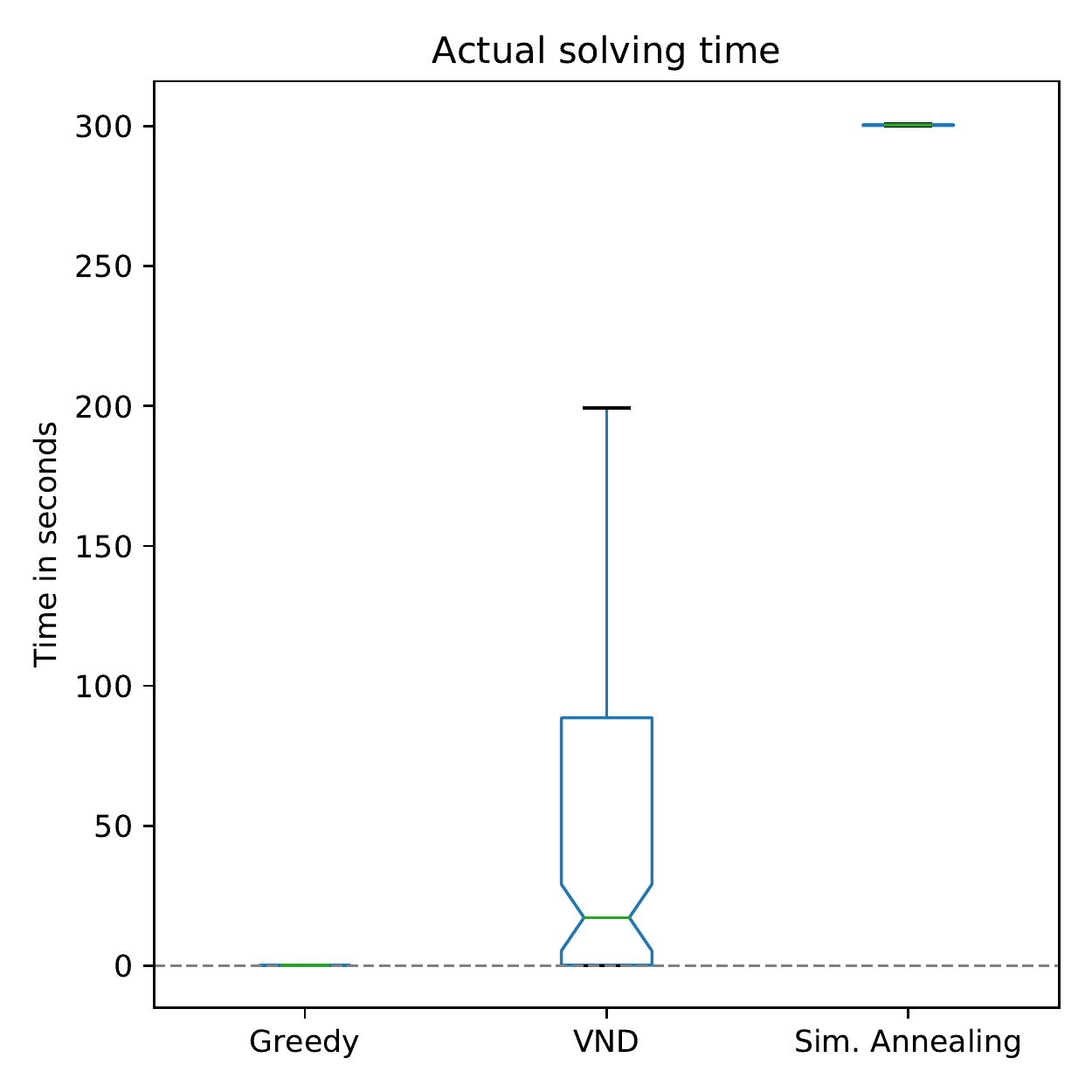}
    \end{subfigure}
    \caption{Comparison of objective values, hard constraint violations and solving time of the Greedy heuristic, \gls{vnd} and Simulated Annealing on the test set.}
    \label{fig:metaheuristic-comparison}


\end{figure}

In the supplementary materials a table with the objective value and the number of hard constraint violations for each instance of the test set for all reported algorithms can be found.

\subsubsection{Sensitivity to neighborhood weightings in Simulated Annealing}

Before selecting the next random move in the Simulated Annealing algorithm the neighborhood is chosen randomly according to some weighting.
This weighting has been tuned by \gls{smac}, resulting in a probability of $0.4$ for the move neighborhood and $0.6$ for the swap neighborhood.
The tuning progress of \gls{smac} revealed quite large fluctuations in this weighting.
Therefore, we conduct a sensitivity analysis in order to find out what impact different weightings have on the results.

We evaluate 6 different weightings, one of which is the result of \gls{smac}.
The probability $p$ for the move neighborhood in the 6 scenarios ranges from 0 to 1 in steps of $0.2$ and the probability of the swap neighborhood is the complementary probability $1 - p$.
Each configuration is executed on the test set 10 times with the usual time limit of five minutes.
The runs are again aggregated using the median.

Figure~\ref{fig:experiment-neighborhood-weighting} shows to the left a box plot for each of the alternative weightings, each of them plotting the associated objective value divided by the objective value of the original weighting.
The labels '$x$ - $y$' mean that the move neighborhood has weight $x$ and the swap neighborhood has weight $y$.
The objective value gets worse in the extreme cases which can be seen because the leftmost and the two rightmost boxes lie completely above the dashed line.
The other cases are practically equal which means that the objective value does not change compared to the reference weighting.

In the right plot of Figure~\ref{fig:experiment-neighborhood-weighting} the difference of the number of hard constraint violations to the respective number of the reference weighting '40 - 60' is shown.
All boxes are completely flat contained in $\{0\}$ which means that the inter-quartile range is equal for all weightings.
The outliers show that a few instances of the extreme configurations have one or two more hard constraint violations more than the reference configuration, while the weightings '20 - 80' and '60 - 40' have tendentially fewer.
However, compared to the number of 137 instances contained in the test set the amount of outliers is very small, so that the significance of this difference must be doubted.

\begin{figure}
    \centering
    
    \begin{subfigure}{.49\textwidth}
        \centering
    	\includegraphics[width=\linewidth]{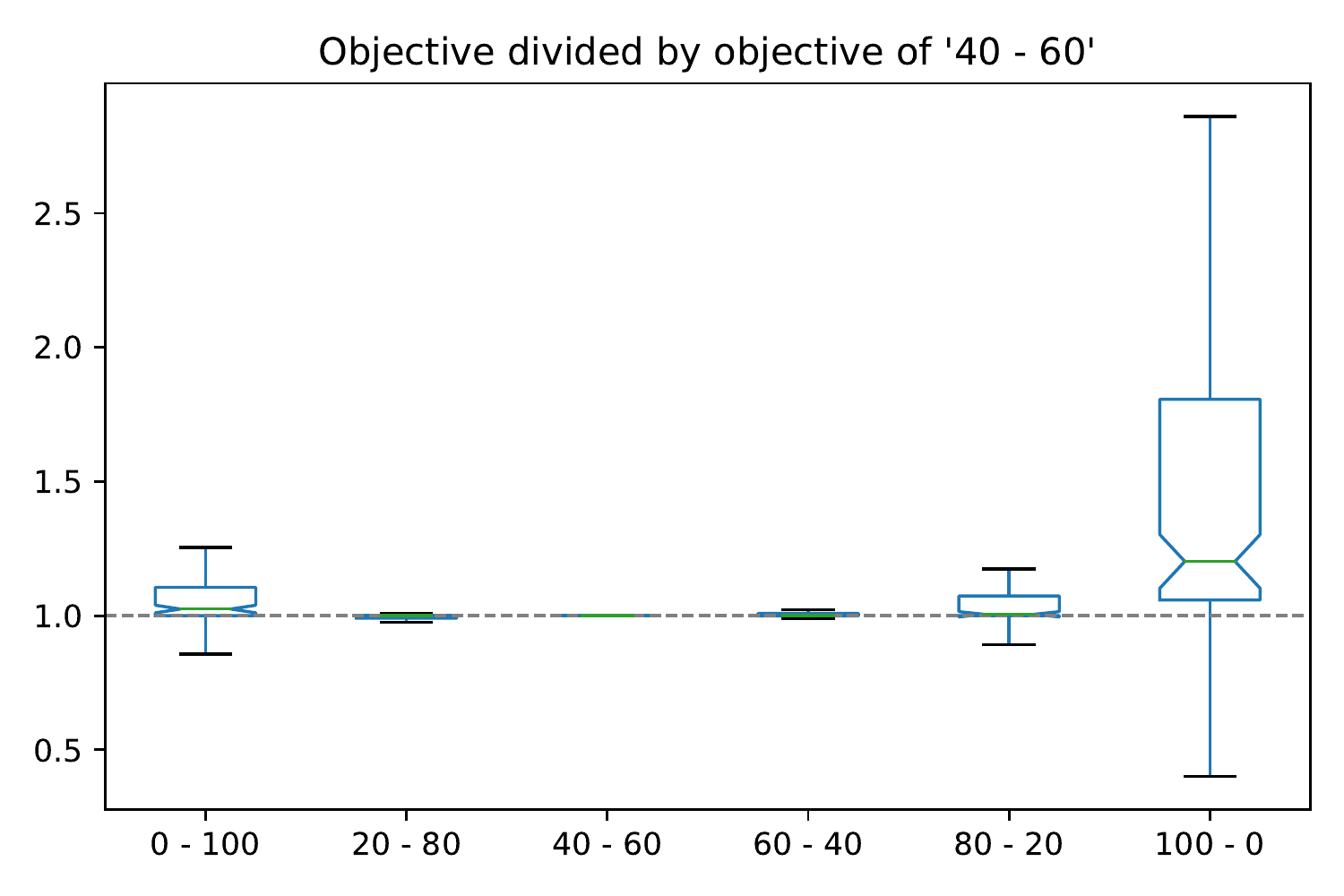}
    \end{subfigure}
    \begin{subfigure}{.49\textwidth}
        \centering
    	\includegraphics[width=\linewidth]{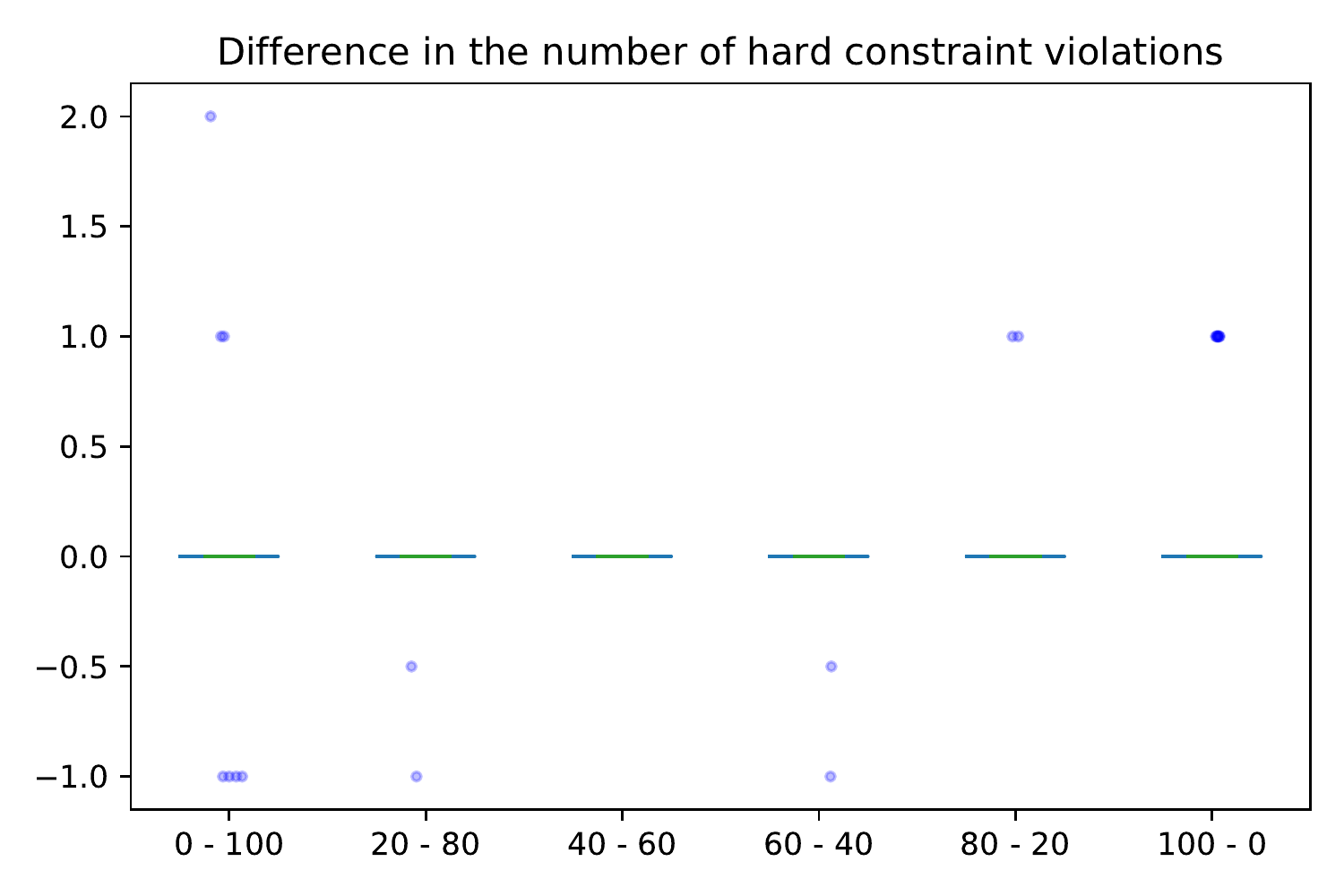}
    \end{subfigure}
    
	\caption{Results of the experiment regarding neighborhood weightings. All the results are comparisons per instance against the schedule '40 - 60'. The first number of each label is the weight of the move neighborhood and the second the weight of the swap neighborhood.}
	\label{fig:experiment-neighborhood-weighting}
\end{figure}

To sum up, the tested neighborhood weightings between '20 - 80' and '60 - 40' are equally good, therefore it is logical to assume that the untested weightings in between are good as well.
Using one of the other weightings leads to worse results, especially in the case where only the move neighborhood is used.

\subsubsection{Optimality gap of metaheuristic solutions}

An interesting question during the evaluation of metaheuristic techniques is by how far the solutions deviate from the optimal solution.
As stated previously, \gls{mip} solvers measure this property in terms of the optimality gap, which is calculated by taking one minus a lower bound divided by the objective value.
In the following we assess how large the optimality gap of the metaheuristic solutions is which can be done by using the lower bounds obtained though \gls{mip}.
However, as only a small fraction of the instances could be solved well enough that this kind of evaluation makes sense, we present afterwards also an analysis based on the randomly generated instances with known optimal solutions.

\begin{enumerate}
	\item \textbf{Optimality gap for small instances}:
	We want to analyze the optimality gap of the solutions produced by our metaheuristic approaches.
	Therefore, we use the best dual bound found by the \gls{mip} solver.
	In order to get even better bounds, we executed the solver again with a time limit of 10 hours and used for each instance the best available bound.
	We restrict this evaluation to the instances in $R_1$ and $R_4$ because they are the only sets where the instances are small enough so that we could obtain mostly good bounds.
	Furthermore, we select the subset of instances whose optimality gap of the best \gls{mip} solution is below $10\%$ because we can only assume safely that the dual bound is good if the gap is small.
	This step eliminates 8 instances of $R_1$ ($30\%$) and 4 of $R_4$ ($40\%$), which means that 25 instances remain for our evaluation.
	By using the best bound the optimality gap for each of the metaheuristic approaches is calculated on the selected instances.
	
	Figure~\ref{fig:mip_r1_r4_optimality_gap} shows the optimality gap for each instance in the reduced set for the greedy heuristic, \gls{vnd}, Simulated Annealing and \gls{mip}.
	For solutions, which are not valid because of constraint violations, no mark is shown.
	The figure conveys, that the solutions found by the greedy construction heuristic have gaps between $10\%$ and $25\%$ and a considerable number of instances is not solved to feasibility at all. \gls{vnd} already achieves drastic improvements by solving all instances but four with a maximum gap of $10\%$ and mostly around $5\%$.
	Simulated Annealing is clearly the best metaheuristic for this restricted set of small instances, as it produced always valid solutions which have a similar optimality gap as the \gls{mip} solutions and in several cases even better.
	The gap is most almost always below $5\%$ percent and with an average of about $3\%$.
	
	The resulting numbers state how large the relative difference between metaheuristic and optimal solutions is \emph{at most}
	\footnote{The gap is calculated using the best lower bound which was proven by the MIP solver. As most of the solutions were not proven optimal, the bounds are most probably smaller than the optimal objective value and thus the calculated gap an upper bound of the actual gap.}.
	This result is interesting because it proves that our Simulated Annealing approach solves the majority of the small instances (which includes most of the realistic instances) extremely well.
	On average the solutions are at most $3\%$ above the optimal one.
	
	\begin{figure}
		\centering
		\includegraphics[width=\linewidth]{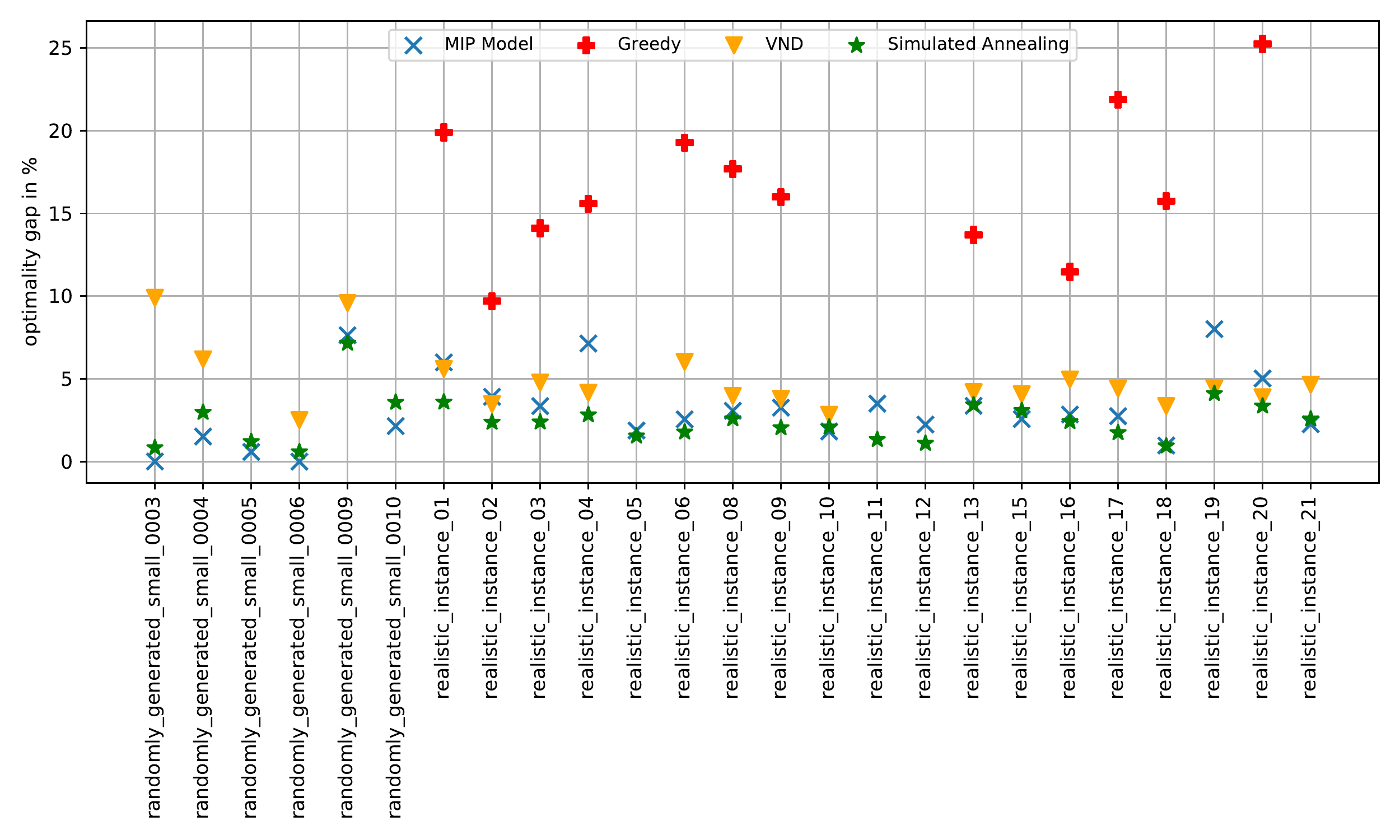}
		\caption{Comparison of the optimality gap of valid solutions obtained though Greedy, \gls{vnd}, Simulated Annealing and \gls{mip}. Marks are missing for solutions that violate constraints. The gap is computed using the dual bounds of \gls{mip}. The evaluation contains all instances of $R_1$ and $R_4$ which could be solved with an optimality gap of $10\%$ or lower using \gls{mip}}
		\label{fig:mip_r1_r4_optimality_gap}
	\end{figure}
	
	\item \textbf{Comparison on large randomly perfect instances $\mathbf{R_2}$}:
	The instances in $R_2$ have been constructed in a way that the optimal solutions are known and have the objective value $0$.
	This enables us draw some conclusions about the optimality gap also for larger instances than those for which we can obtain bounds using \gls{mip}.
	However, the optimality gap as defined above cannot be computed for the instances in $R_2$ because the best known bound is zero which would lead to a division by zero.
	Instead, we can say something similar by looking at the objective function.
	The first objective function component $g_1$, defined in Equation~\eqref{obj1_abs} states, informally speaking, the gap to a hypothetical perfectly leveled solution averaged over all periods.
	In case of the instances $R_2$ this hypothetical solution actually exists and the value $g_1$ can be interpreted as the average percentage by which planned demand for each period exceeds or falls short of the target.
	The same argument holds for $g_2$, defined in Equation~\eqref{obj2_abs}, which states average percentage by which planned demand for each period exceeds or falls short of the target, averaged over the different products.
	The objective component $g_3$, defined in Equation~\eqref{obj3_abs}, can be interpreted as the percentage of actual priority inversions measured against the theoretical maximum number of inversions $\frac{k \cdot (k-1)}{2}$.
	
	Table~\ref{tab:r2_results} shows the values of $g_1$, $g_2$ and $g_3$ multiplied by 100, so that they can be interpreted as percentages, as explained above for each each algorithm and for each instance of $R_2$ where a valid solution has been reached.
	The fourth column reports the percentage of valid solutions.
	We can see that Simulated Annealing and \gls{vnd} reach negligible mean deviations for the first two objectives.
	However, also the greedy heuristic produces levelings which deviate from the target by only $2\%$ on average, so we can assume that this part of the task is not very hard for this instance set.
	With respect to the priority objective the results leave some more room for improvements.
	The greedy heuristic reaches the best value here (at the cost of fewer valid solutions and a worse leveling), followed by \gls{vnd} and Simulated Annealing.
	The best total results are clearly reached by \gls{vnd} for this instance set.
	It is not entirely clear why the results on $R_2$ differ so much from the results which are obtained on the other parts of the test set but we suspect that it might have something to do with that the greedy heuristic finds very good initial solutions on this instance set.
	That might help \gls{vnd} more than Simulated Annealing because that latter starts with a random search which destroys some of the good structure which was already there.
	
	\begin{table}
		\centering
		\caption{Results for metaheuristic methods on $R_2$. The values of the objective function components $g_1$, $g_2$ and $g_3$ multiplied by 100 so that they can be interpreted as percentages for each each algorithm for each instance of $R_2$ where a valid solution has been reached. The rightmost column states for how many percent of the solutions each algorithm reached a valid solution.}
		\label{tab:r2_results}
		\begin{tabular}{@{}lrrrr@{}}
			\toprule
			{} & $g_1 (\%)$ & $g_2 (\%)$ & $g_3 (\%)$ &  valid (\%) \\
			\midrule
			\textbf{Greedy                      } &  1.78 &  2.25 &  1.78 &  78.00 \\
			\textbf{VND                         } &  0.04 &  0.34 &  2.62 &  96.00 \\
			\textbf{Simulated Annealing (median)} &  0.32 &  0.47 &  7.95 &  92.00 \\		
			\textbf{Optimum                     } &  0.00 &  0.00 &  0.00 & 100.00 \\
			\bottomrule
		\end{tabular}
		\end{table}
\end{enumerate}

The above analysis of the optimality gap on small, realistic instances and the larger perfectly solvable ones revealed that our metaheuristic methods produce solutions which are only few percentage points away from the optimum.
When taking also the comparison of the different metaheuristics on the whole test set into account, which was presented in the previous section, we can conclude that Simulated Annealing is the overall best of our algorithms for the \gls{plp} because it can solve small instances in an excellent way and still scales to the largest instances which we have.

\section{Conclusion}
\label{sec:conclusion}

We introduced a new combinatorial optimization problem in the area of production planning, which concerns the assignment of orders to production periods. 
Thereby a number of production capacity constraints need to be fulfilled and a work balancing objective as well as the prioritization of the orders must be optimized.

We started with modeling the problem formally and provided a proof of \NP-hardness.
Then we turned towards solution methods and introduced a new \gls{mip} formulation.
Finally we investigated local search methods and defined two neighborhood structures for the problem, which we evaluated using \gls{vnd} and Simulated Annealing.

The main results of this work are:
\begin{itemize}
	\item The \gls{plp} is \NP-hard, which was shown by an \NP-completeness proof of the associated decision problem via a reduction from Bin Packing.
	
	\item The proposed \gls{mip} model solves instances up to medium size.
	The complexity of solving grows with every dimension of the problem, but most notably with the number of orders $k$.
	For instances with less than 250 orders we can expect either a feasible solution or the proof of infeasibility within one hour of time, while we cannot count on finding any solution for instances with $k \geq 300$.
	
	\item With Simulated Annealing very good solutions can be obtained within five minutes.
	The real-life instances can all be solved well and for most of them we can show though the use of dual bounds that the solutions are within $3\%$ of optimality on average.
	Experiments based on the set of instances with perfectly leveled solutions indicate that Simulated Annealing is capable of providing really good solutions also for much larger instances.
	
	\item Another metaheuristic option studied in this work is \gls{vnd}.
	While the objective value and the number of hard constraint violations is a bit higher on average compared to Simulated Annealing, it still finds good solutions most of the time.
	The instances with perfectly leveled solutions $R_2$ are even solved better by \gls{vnd} than Simulated Annealing.
	
	\item An experiment regarding the weighting of the two neighborhoods in Simulated Annealing showed that it is clearly advantageous to use both neighborhoods instead of either of them alone.
	The best weighting between the move and the swap neighborhood is between '20 - 80' and '60 - 40'.
\end{itemize}

Future work could include improvements of the \gls{mip} model, e.g.\ by decomposition based techniques, so that more instances can be solved optimally or better bounds are obtained.
That would open up ways to assess the quality of metaheuristic solutions of large instance and to see more clearly where they fall short.

\subsubsection*{Acknowledgments}
The financial support by the Austrian Federal Ministry for Digital and Economic Affairs and the National Foundation for Research, Technology and Development is gratefully acknowledged.
Further, we would like to thank Christoph Mrkvicka (MCP GmbH) for his help regarding problem formulation.

\bibliographystyle{plain}  
\bibliography{lit}

\end{document}